\newcommand{\tabincell}[2]{\begin{tabular}{@{}#1@{}}#2\end{tabular}}
\newtheorem{myTheo}{Theorem}
\newtheorem{myLem}{Lemma}
\newtheorem{myDef}{Definition}
\newtheorem{thm}{Theorem}[section]
\newtheorem{lem}{Lemma}[section]
\renewcommand{\algorithmicrequire}{\textbf{Input:}} 
\renewcommand{\algorithmicensure}{\textbf{Output:}}
\begin{document}
%
\title{Diverse Sample Generation: Pushing the Limit of Generative Data-free Quantization}
%
%
%
%

\author{Haotong~Qin, Yifu~Ding, Xiangguo~Zhang, Jiakai~Wang,
Xianglong~Liu$^*$,~\IEEEmembership{Member,~IEEE,} and~Jiwen~Lu,~\IEEEmembership{Senior~Member,~IEEE}
\IEEEcompsocitemizethanks{\IEEEcompsocthanksitem H. Qin, Y. Ding, X. Zhang, and X. Liu (corresponding author, E-mail: xlliu@buaa.edu.cn) are with the State Key Laboratory of Software Development Environment, Beihang University, China.
\IEEEcompsocthanksitem J. Wang is with Zhongguancun Laboratory, China.
\IEEEcompsocthanksitem J. Lu is with the Department of Automation, Tsinghua University, China.
}
\thanks{Our code is released at \protect\url{https://github.com/htqin/DSG}.}}

\markboth{Journal of \LaTeX\ Class Files,~Vol.~14, No.~8, August~2015}%
{Shell \MakeLowercase{\textit{et al.}}: Bare Demo of IEEEtran.cls for Computer Society Journals}
%



\IEEEtitleabstractindextext{%
\begin{abstract}
Generative data-free quantization emerges as a practical compression approach that quantizes deep neural networks to low bit-width without accessing the real data. This approach generates data utilizing batch normalization (BN) statistics of the full-precision networks to quantize the networks. However, it always faces the serious challenges of accuracy degradation in practice. We first give a theoretical analysis that the diversity of synthetic samples is crucial for the data-free quantization, while in existing approaches, the synthetic data completely constrained by BN statistics experimentally exhibit severe homogenization at distribution and sample levels. This paper presents a generic \textbf{D}iverse \textbf{S}ample \textbf{G}eneration (\textbf{DSG}) scheme for the generative data-free quantization, to mitigate detrimental homogenization. We first slack the statistics alignment for features in the BN layer to relax the distribution constraint. Then, we strengthen the loss impact of the specific BN layers for different samples and inhibit the correlation among samples in the generation process, to diversify samples from the statistical and spatial perspectives, respectively. Comprehensive experiments show that for large-scale image classification tasks, our DSG can consistently quantization performance on different neural architectures, especially under ultra-low bit-width. And data diversification caused by our DSG brings a general gain to various quantization-aware training and post-training quantization approaches, demonstrating its generality and effectiveness.
\end{abstract}

\begin{IEEEkeywords}
data-free quantization, quantized neural networks, model compression, deep learning.
\end{IEEEkeywords}}

\maketitle

\IEEEdisplaynontitleabstractindextext

%
\IEEEpeerreviewmaketitle

\IEEEraisesectionheading{\section{Introduction}}
\label{sec:intro}
\IEEEPARstart{W}{ith} the advent of deep learning, the deep neural network has achieved a great success in a variety of fields, such as image classification~\cite{krizhevsky2012imagenet,VeryDeepConvolutional,9384353}, object detection~\cite{DBLP:journals/corr/GirshickDDM13,DBLP:journals/corr/Girshick15,DBLP:journals/corr/abs-1904-02701,NIPS2015_5638}, semantic segmentation~\cite{Everingham:2010:PVO:1747084.1747104,Zhuang_2019_CVPR}, etc.
Nevertheless, it is still a significant challenge to apply advanced neural networks on resource-limited devices for their high memory usage and expensive computation. With more and more hardware supporting low bit-width computations, network quantization emerges as an efficient method to compress and accelerate models~\cite{8444745,9072484,8417979,9319565,Qin_2020_CVPR,8674614,9454278,8573867,Qin_2020_pr}.
Many quantization methods, called quantization-aware training (QAT), apply the following pipeline: considering the quantization function in the training process on the original dataset, and minimizing the loss caused by the quantization through backward propagation. Since QAT methods require the finetuning steps, it is considered to be time-consuming and computationally intensive~\cite{gupta2015deep,jacob2018quantization,qin2020bipointnet}.
Thus, quantization without training or finetuning process is also demanded in the industry, which is called post-training quantization (PTQ) in recent studies~\cite{banner2019posttraining,choukroun2019low,zhao2019improving,nagel2020down,li2021brecq}.

\begin{figure}[t]
\flushleft
\includegraphics[width=0.95\linewidth]{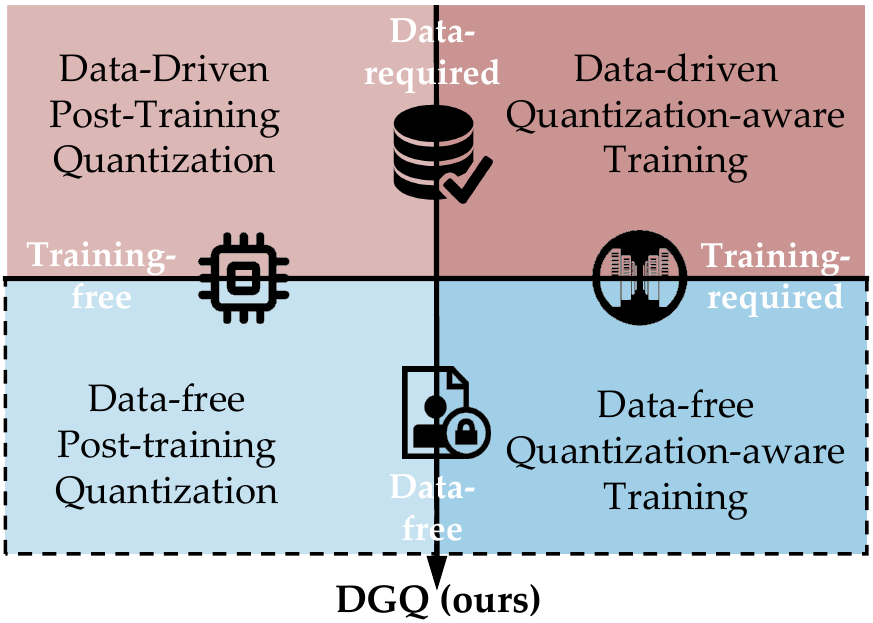}
\caption{The existing quantization approaches can be quartered by data-required or not and train/finetune-required or not. Our generic Diverse Sample Generation (DSG) method is proposed for data-free quantization approaches, including data-free post-training quantization and quantization-aware training.}
\label{fig:categories}
\end{figure}

\begin{figure}[t]
\centering
\subfigure[]{
\includegraphics[width=0.48\linewidth]{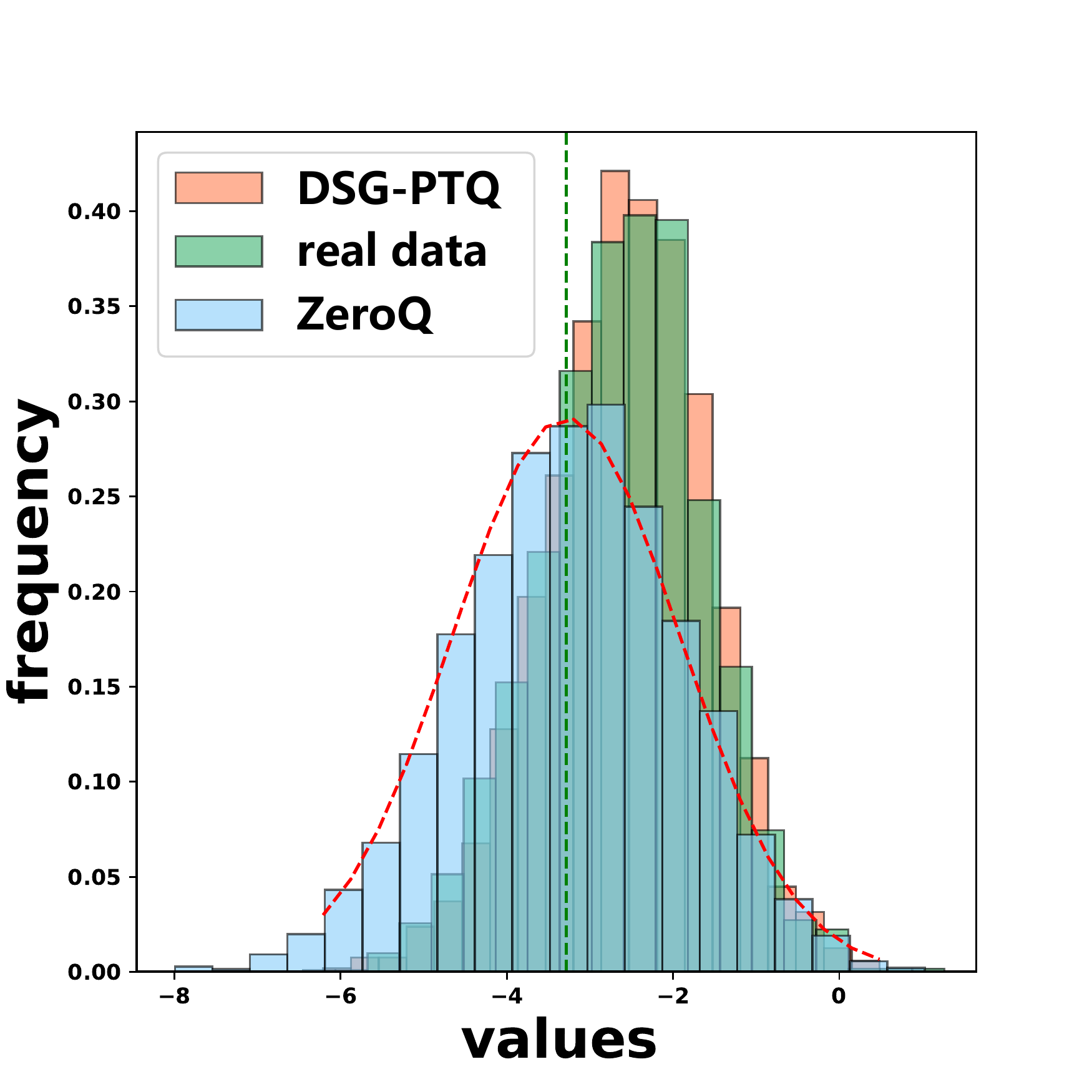}
\label{fig:distribution_a}
}
\hspace{-0.4cm}
\subfigure[]{
\includegraphics[width=0.48\linewidth]{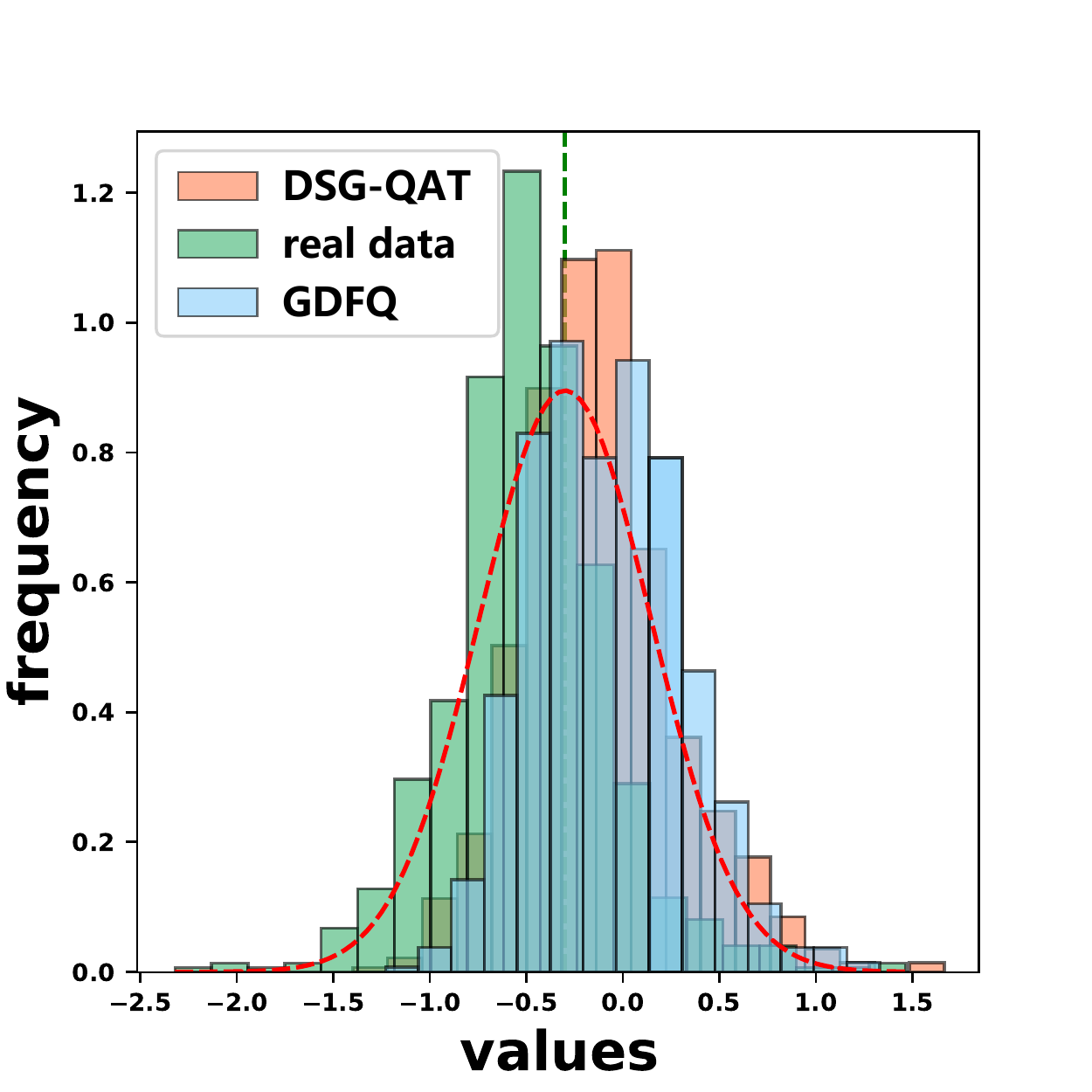}
\label{fig:distribution_c}
}\vskip -0.05in
    
\subfigure[]{
\includegraphics[width=0.48\linewidth]{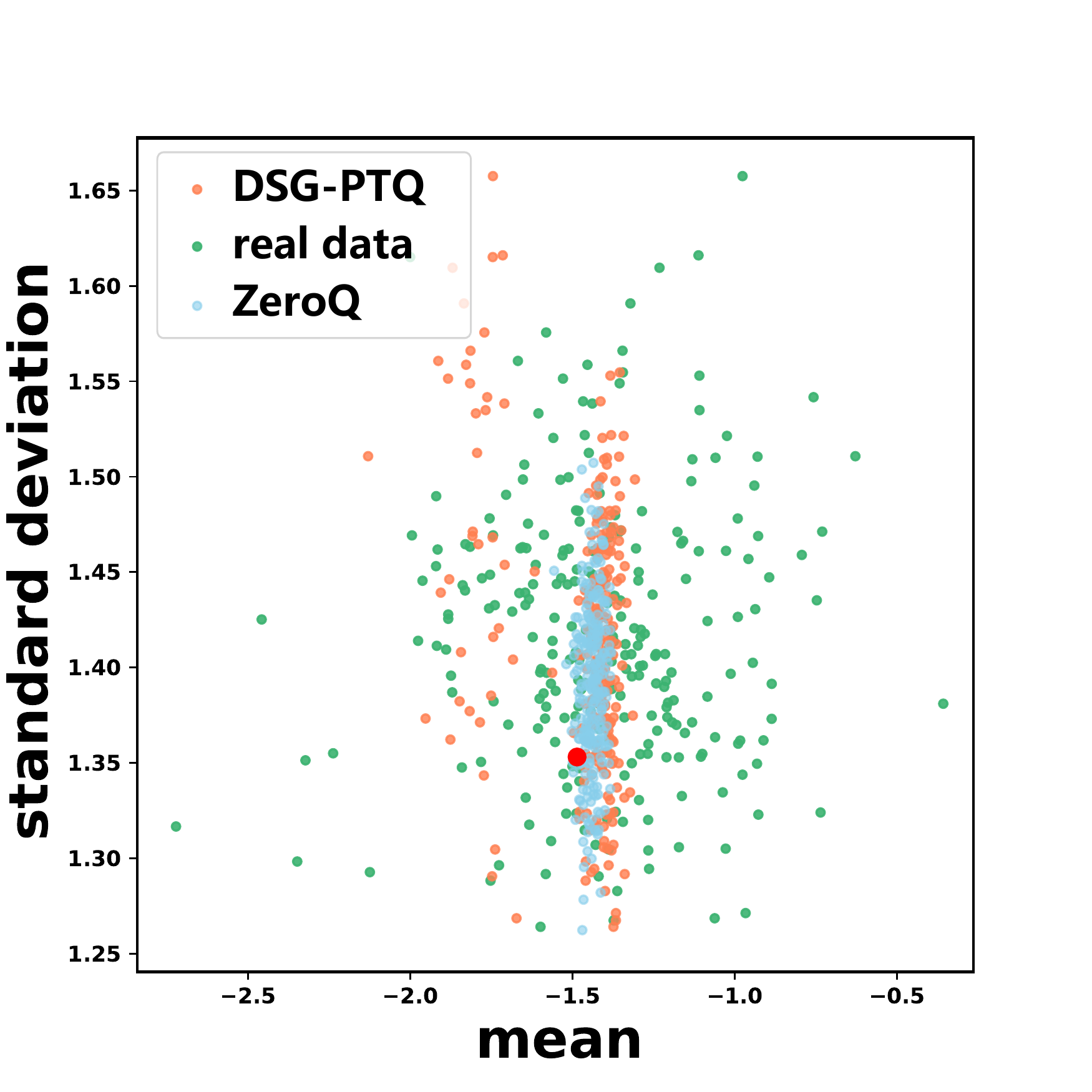}
\label{fig:distribution_b}
}
\hspace{-0.5cm}
\subfigure[]{
\includegraphics[width=0.48\linewidth]{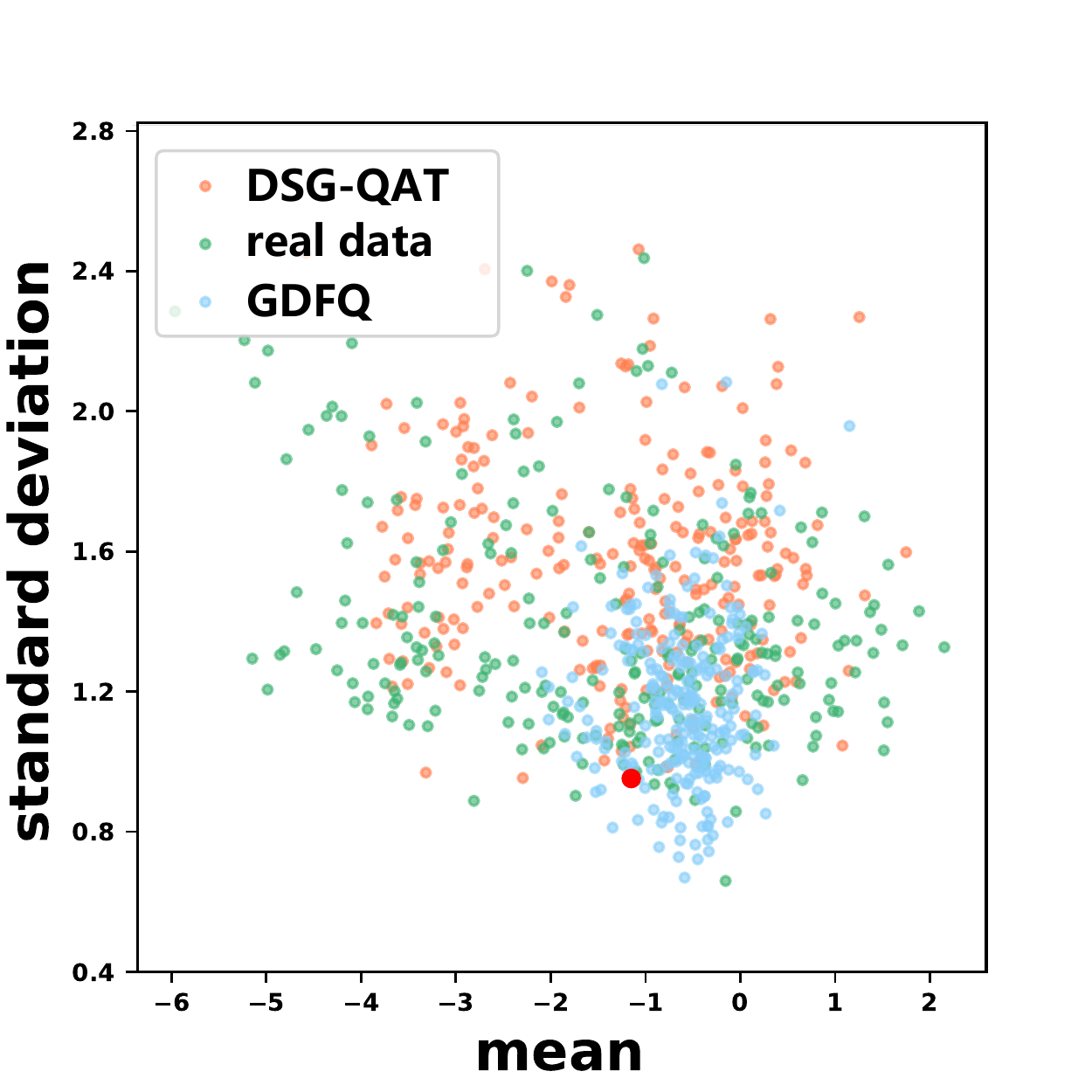}
\label{fig:distribution_d}
}\vskip -0.05in

\subfigure[]{
\includegraphics[width=0.41\linewidth]{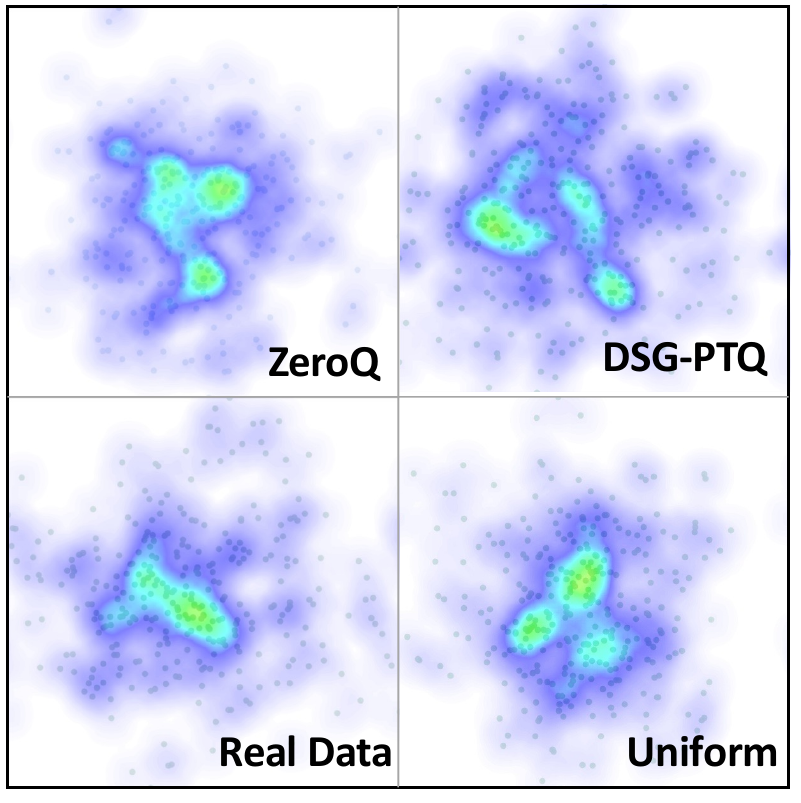}
\label{fig:space_e}
}
\hspace{0.2cm}
\subfigure[]{
\includegraphics[width=0.41\linewidth]{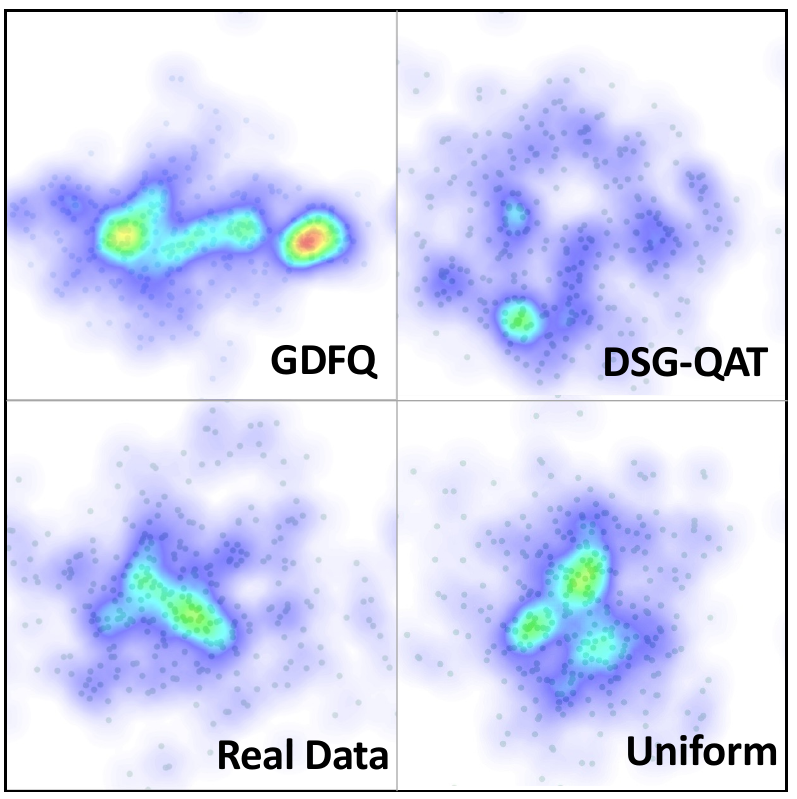}
\label{fig:space_f}
}
\vspace{-0.1in}
\caption{Comparison between real data (random sampling from ImageNet) and data synthesized by generative data-free quantization methods (DSG and ZeroQ for PTQ in the first column, DSG and GDFQ for QAT in the second column) with 256 samples of each. (a) and (b) show the activation distribution of one channel in ResNet18 for PTQ and QAT, respectively. 
(c) and (d) are charts of the mean and standard deviation for one channel of synthetic data. 
(e) and (f) are the visualization of the synthetic data after dimensionality reduction with PCA~\cite{pca}.
}
\label{fig:distribution}
\end{figure}

Most quantization approaches are designed for the data-driven scenario, and thus require real data in their quantization process.
However, real data is not always accessible for privacy or security concerns (\textit{e.g.}, medical and user data).
Fortunately, data-free quantization is proposed to quantize deep neural networks without accessing real data.
As shown in Fig.~\ref{fig:categories}, the data-free quantization approaches can also be classified as data-free PTQ and data-free QAT according to whether the model training is required.
Among existing data-free quantization methods, generative methods calibrate or train networks using the "optimal synthetic data", which with the distribution best matches the Batch Normalization (BN) statistics of the original full-precision neural network.
Generative data-free PTQ methods apply BN statistics loss to directly update the generated data and then progress a computation-saving calibration process~\cite{Nagel_2019_ICCV,cai2020zeroq,Zhang_2021_CVPR}, while QAT ones generally introduce a separate generator for synthesizing data towards higher accuracy, and train it and the quantized network jointly~\cite{choi2020datafree,xu2020generative,he2021generative,yuang2021zaq}.

However, among existing quantization approaches, the data-driven methods always achieve significantly higher accuracy compared with data-free ones, whether for PTQ or QAT.
One intuitive reason is that the difference between real data and synthetic data leads to the gap in the performance between data-driven and data-free quantization approaches.
As a result, many approaches attempt to improve on synthetic data so that the performance of generative data-free quantization methods continues to approach data-driven quantization.
In this work, we first give a theoretical analysis from a diversity view for generative data-free quantization. 
Our analysis shows that synthetic samples should be distributed as dispersed in the input domain while satisfying given constraints. This is often described as the diversification of synthetic samples and would facilitate the optimization of the quantization to cover its potential input domain as possible.
Compared with real training and testing datasets that usually approximately satisfy the Independent and Identically Distributed (IID) assumption, synthetic data generated according to certain constraints in generative methods are difficult to meet the diversity requirement.

Corresponding to our theoretical analysis, we also experimentally reveals that severe homogenization exists in the data generation processes of existing typical data-free generative quantization methods (ZeroQ and GDFQ methods for PTQ and QAT, respectively).

\noindent \textbf{First}, the synthetic data is constrained to match the BN statistics, and thereby the feature distribution might overfit the BN statistics in each layer when the data forward propagate in models.
As shown in Fig.~\ref{fig:distribution_a} and Fig.~\ref{fig:distribution_c}, the distribution of samples synthesized by existing PTQ and QAT methods almost strictly follows the Gaussian distribution with corresponding BN mean and variance, while the distribution of real data has an obvious offset and enjoys a more diverse distribution.

\noindent We consider the first phenomenon as the \textit{\textbf{distribution-level homogenization}}.

\noindent \textbf{Second}, all samples of synthetic data are updated by one specific objective function in typical generative data-free quantization methods, all synthetic samples are applied to the same constraint and directly sum the loss term of each layer. In Fig.~\ref{fig:distribution_b} and Fig.~\ref{fig:distribution_d}, the distribution statistics of real data are dispersed while the data generated by existing approaches are centralized.

\noindent \textbf{Third}, for data-free QAT methods, synthetic samples are synthesized by a generator network.
In the process of learning the distributions of synthetic data for quantization, both the generator and the quantized network may converge to a trivial solution where the former learns to produce few modes exclusively, which is referred to by mode collapse and causes the synthetic data to clustered in sample space~\cite{elfeki2019gdpp}.
As Fig.~\ref{fig:space_e} and Fig.~\ref{fig:space_f} show, the synthetic data is aggregated while the real data is scattered. 

\noindent The second and third phenomena are considered to be \textit{\textbf{sample-level homogenization}} from the statistical and spatial perspectives.

\noindent In a word, the distribution-level homogenization means that the overall distribution of synthetic data is strictly restricted to the specific distribution with BN statistics, while the sample-level homogenization means the little differences among samples from the statistical and spatial perspectives.
Only mitigating homogenization at one certain level cannot ensure the diversity of data at the other level.

To alleviate the accuracy degeneration of the quantized neural network caused by the homogenization of synthetic data, this paper presents a generic data generation scheme, \textbf{D}iverse \textbf{S}ample \textbf{G}eneration (DSG), for generative data-free quantization to enhance the diversity of the synthetic data. 
The proposed DSG scheme mainly relies on three novel techniques: 
\textit{Slack Distribution Alignment} (SDA): relax the distribution constraint of synthetic data by slacking the feature statistics alignment in each BN layer.  
\textit{Layerwise Sample Enhancement} (LSE): strengthen the impact of statistics loss of the specific BN layer for its corresponding synthetic sample by applying a layerwise enhancement.
\textit{Sample Correlation inhibition} (SCI): weaken the correlation among the synthetic samples by applying determinantal point processes loss to the intermediate features in the generation process.
Among these techniques, SDA diversifies the synthetic data at the distribution level, while LSE and SCI diversify it at the sample level from the statistical and spatial perspective, respectively.
Considering the generality, DSG focuses on the improvement of the generation process while could be almost decoupled from specific quantization methods.
Therefore, the proposed techniques of our DSG can be flexibly applied to different PTQ and QAT quantization approaches effectively, improving the accuracy of quantized neural networks by diversifying synthetic data in the generation process.

Our scheme presents a novel perspective of data diversity for generative data-free quantization, and extensive experiments show that DSG significantly improves both data-free PTQ and QAT.
The DSG performs remarkably well across several mainstream neural architectures, including VGG16bn~\cite{VeryDeepConvolutional} (the VGG16 with BN for the dense layers), ResNet18/20/50~\cite{he2016deep}, SqueezeNext~\cite{gholami2018squeezenext}, InceptionV3~\cite{szegedy2016rethinking}, ShuffleNet~\cite{zhang2018shufflenet}, and MobileNetV2~\cite{sandler2018mobilenetv2}, and surpasses the existing data-free methods in a wide margin on the large-scale image classification task and achieves the state-of-the-art (SOTA) results.
The quantized networks quantized by our DSG even outperform networks quantized by the real data under various settings.
The performance of the quantized neural networks trained by our DSG scheme is pushed to that of their full-precision counterparts, meanwhile, the quantization process gets rid of the data dependence.

We summarize the main contributions of this paper as:
\begin{itemize}
\item {We first give a theoretical analysis for the utility of data diversity in generative data-free quantization. Our analysis shows that synthetic samples should be distributed as dispersed while satisfying given constraints, which would facilitate the optimization of quantization to cover its potential input domain as possible.}
\item {We experimentally show and analyze the homogenization of synthetic data in existing generative data-free quantization. Our study presents the homogenization at the distribution and sample levels and reveals that the sample-level homogenization is not limited to the statistical perspective, but also the spatial perspective.}
\item {We propose a novel generic DSG scheme for generative data-free quantization from a comprehensive perspective of data diversity, which effectively improves the data-free PTQ and QAT. DSG presents a novel Sample Correlation Inhibition (SCI), in conjunction with the Slack Distribution Alignment (SDA) and Layerwise Sample Enhancement (LSE) techniques to diversify synthetic data at distribution and sample level.}
\item {We conduct a detailed ablation study on the proposed DSG, which presents the effectiveness of the proposed techniques (SDA, LSE, and SCI) in generative data-free PTQ and QAT.
And the comprehensive evaluation of the DSG scheme shows that our DSG surpasses the existing SOTA methods by a wide margin on various neural architectures and bit-widths, which demonstrates that the diversity is an important property of high-quality synthetic data.}
\end{itemize}

Note that our paper extends the preliminary conference paper~\cite{Zhang_2021_CVPR}.
This manuscript first gives a theoretical analysis that diversifying the synthetic samples is a crucial element for improving the data-free quantization process, and experimentally reveals that the sample-level homogenization is not limited to the statistical perspective, but also the spatial perspective.
This work proposed a generic DSG scheme for generative data-free quantization, including both PTQ and QAT, while the previous scheme proposed in the conference paper can be regarded as a special case for PTQ.
The DSG in this work further presents a novel SCI technique to tackle the sample-level homogenization from the spatial perspective, which can both significantly improves the performance of various generative data-free quantization approaches. 
We evaluate our DSG scheme on various neural architectures and compared it with more generative data-free quantization methods. 
Besides, we further add more analysis and discussion of synthetic data and evaluate the data with various methods. The results present that diversity is an important property of high-quality synthetic data in the generative data-free quantization.

\begin{figure*}[t]
\centering
\includegraphics[width=1\linewidth]{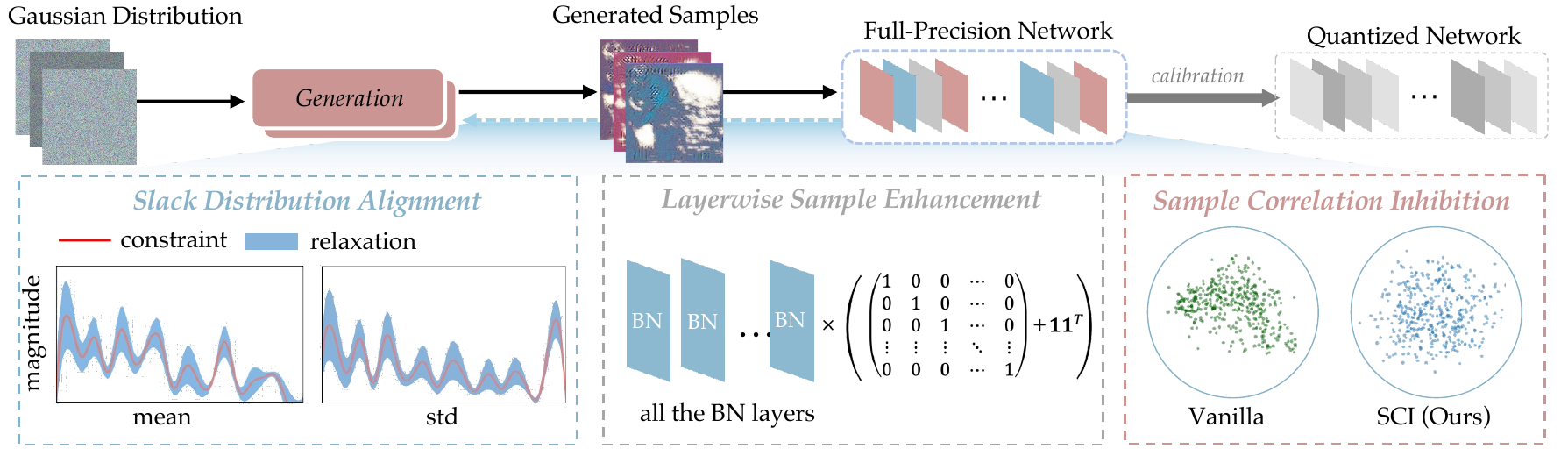}
\caption{The overview of the Diverse Sample Generation (DSG) scheme for data-free quantization. The proposed three techniques are applied to the generation process: Slack Distribution Alignment (SDA) relaxes BN statistics constraint in each layer; Layerwise Sample Enhancement (LSE) provides a specific loss term for each sample; Sample Correlation Inhibition (SCI) diversifies the synthetic data in sample space. The first one diversifies synthetic data at the distribution level, and the latter two at the sample level}
\label{fig:structure}
\end{figure*}

\section{Related Work}

\textbf{Data-driven Quantization.}
As a widely used compression technique, quantization earns lots of attention in recent years. To mitigate the severe accuracy drop resulted from quantization, many studies, such as \cite{gupta2015deep, jacob2018quantization,9454278,9072484}, utilize quantization-aware training methods to regain the accuracy. Generally, they can always provide performance improvements. However, the training process is quite cumbersome for huge computational and time costs. Another critical deficiency is that the datasets are not always ready to use, especially for privacy and security concerns. Without any finetuning or training process, research of post-training quantization has been conducted to improve the trade-off between accuracy and computational cost. \cite{banner2019posttraining} introduces two closed-form analytical solutions targeting clipping approximation and per-channel bit allocation combined with bias-correction to push the limit of weights and activations quantization to 4-bit. \cite{choukroun2019low} analyzes the mean squared error of quantization and minimizing it via the OMSE method. Unlike previous work which aims to find an appropriate clipping range to deal with outliers in the bell-shaped distribution, \cite{zhao2019improving} proposes outlier channel splitting while remains a functionally identical network. Not restricted to the round-to-nearest method, \cite{nagel2020down} exploits a more flexible weight-rounding mechanism called AdaRound, which optimizes task loss due to quantization w.r.t. the preactivation. However, these methods still require access to the limited data for a more resilient and better-quantized model.

\textbf{Data-free Quantization.}
Several recent studies have explored quantization methods without training and even calibration datasets. \cite{Nagel_2019_ICCV} uses weight equalization to acquire similar weight ranges among channels and bias correction for activations. However, quantizing models to run in 6-bit or lower bit-width is a non-trivial task. 
Works like \cite{haroush2020knowledge},  \cite{cai2020zeroq} and \cite{he2021generative} leverage BN statistics to reconstruct "realistic" data. 
Both \cite{haroush2020knowledge} and \cite{xu2020generative} utilizes knowledge distillation scheme to finetune the model, while it might cost a little more time. \cite{haroush2020knowledge} proposes an inception scheme to generate data, while \cite{he2021generative} utilizes a generative modeling to deal with the distribution of BN statistics. 
Besides, \cite{cai2020zeroq} can work on multi-bit quantization utilizing the Pareto frontier. 
\cite{choi2020datafree} and \cite{yuang2021zaq} focuses on adversarial approaches. \cite{choi2020datafree} proposes a data-free adversarial knowledge distillation method, which minimizes the maximum distance between the outputs of the full-precision teacher model and the quantized student model for any adversarial samples from a generator. 
\cite{yuang2021zaq} instead not only considers the output of model layer but also maximizes the discrepancy of feature map in intermediate inter-channel between teacher and student model by the proposed Channel Relation Map. 
Although the above methods use distinct tricks to synthesize data and improve model accuracy, they all have similar limitations and we take  \cite{cai2020zeroq} as an example to illustrate in Section~\ref{sec:homogenization} specifically. 

\section{Methods}

In this section, we first revisit the previous works of generative data-free quantization. Then, we reveal the data homogenization phenomenon in existing methods and improve the generation process to diversify the synthetic samples.

\subsection{Preliminaries}
\label{sec:preliminaries}

Existing data-driven quantization methods require original training and testing datasets to calibrate or finetune the quantized network for higher performance. But the real data is not accessible in many practical applications for privacy or security concerns, and thereby general quantization schemes cannot be applied directly to these scenarios. The data-free quantization is proposed to quantize the network without access to real data.


Since the real data is inaccessible, the knowledge of the real data in the pre-trained network should be fully exploited in data-free quantization.
The statistics (mean and standard deviation) of BN layers in full-precision models fit the original real dataset in the training process. Therefore, most data-free quantization schemes construct the synthetic data by using BN statistics loss to utilize the information in the BN layers.
The following optimization objective enables the distribution of synthetic data $\mathbf{x}^s$ to best match the BN statistics, including the mean and standard deviation:
\begin{equation}
\label{eq:BNS}
\min\limits_{\mathbf{x}^s}\mathcal{L}_\textrm{BN}=
\sum\limits_{i=1}^N
\left\|\boldsymbol{\tilde{\mu}}_{i}^s-\boldsymbol{\mu}_{i}\right\|_{2}^{2}+\left\|\boldsymbol{\tilde{\sigma}}_{i}^s-\boldsymbol{\sigma}_{i}\right\|_{2}^{2},
\end{equation}
where $\boldsymbol{\tilde{\mu}}_{i}^s$ and $\boldsymbol{\tilde{\sigma}}_{i}^s$ are the mean and standard deviation of feature distribution of synthetic data $\mathbf{x}^s$ at $i$-th BN layer, while $\boldsymbol{\mu}_{i}$ and $\boldsymbol{\sigma}_{i}$ are that of pre-trained full-precision model $\textrm{M}$. By minimizing the BN statistics loss $\mathcal{L}_\textrm{BN}$, the synthetic data or the generator learns a distribution of input data to best match BN statistics of each layer.


Among most of the mainstream generative data-free quantization methods, the PTQ methods directly update the data from random vectors during the generation process and then use the synthetic data to calibrate the quantized neural network.
The typical optimization objective in generation process of these methods can be expressed as
\begin{equation}
\label{eq:dfptq}
\min\limits_{\mathbf{x}^s} \mathcal{L}_\textrm{PTQ}\left(\mathbf{x}^s, \textrm{M}\right),
\end{equation}
where $\textrm{M}$ denotes the full-precision model, and the loss function $\mathcal{L}_\textrm{PTQ}$ is usually the BN statistic loss $\mathcal{L}_\textrm{BN}$.
The synthetic data $\mathbf{x}^s$ is then used to calibrate the quantized model, a process that is usually completely separate from the previous generation process.

While the generative data-free QAT methods train a separate generator model to generate the synthetic data and use them to train the quantized neural network.
The typical generation process of these methods is expressed as follows:
\begin{equation}
\label{eq:gen}
\min\limits_{\textrm{G}} \mathcal{L}_\textrm{QAT}\left(\textrm{G}(\mathbf{x}^{s*}), \textrm{M}\right),\quad 
\mathbf{x}^s=\textrm{G}(\mathbf{x}^{s*}),
\end{equation}
where $\mathbf{x}^{s*}$ is an initial tensor drawn from standard Gaussian distribution $\mathcal{N}(0, 1)$, the loss function $\mathcal{L}_\textrm{QAT}$ usually consists of the BN statistical loss and the cross-entropy function of the generator $\textrm{G}$, \textit{etc.}, and $\mathbf{x}^s$ and $y$ are synthetic data and the corresponding label, respectively.
And the quantized neural network $\textrm{Q}$ is trained simultaneously using synthetic data:
\begin{equation}
\label{eq:dfqat}
\min\limits_{\textrm{Q}} \mathbb{E}_{\mathbf{x}^s, y}\left[\mathcal{C}\left(\textrm{Q}, (\mathbf{x}^s; y)\right)\right],
\end{equation}
where $\mathcal{C}(\cdot,\cdot)$ is a loss function for quantized neural network $\textrm{Q}$, such as cross-entropy loss and mean squared error, and $y$ is the corresponding labels of synthetic data $\mathbf{x}^s$.

\subsection{Diversity of Synthetic Data for Quantization}
\label{sec:why}

In generative data-free quantization and other related generative works, diverse synthetic data is always more welcome intuitively. But there is little relevant theoretical analysis to support this intuition, especially in the context of the data synthesis with strict constraints. It is important to understand the theoretical motivation behind this intuition. Therefore, we give a theoretical analysis to show that in various generative data-free quantization approaches, the diversity of synthetic data facilitates the quantization process.

\begin{myDef}
\label{def:dfq}
For a well-trained full-precision model $\textrm{M}$, the optimization objective of the whole generative data-free quantization process can be abstracted as
$$\min\limits_{\mathbf{x}^s, \rm{Q}} \mathcal{L}_1\left(\mathbf{x}^s, \rm{M}\right) + \mathcal{L}_2\left(\rm{Q}, \mathbf{x}^s\right),\quad \mathbf{x}^s\subseteq \mathcal{X},$$ 
where $\mathbf{x}^s$ is the synthetic data, $\rm{M}$ and $\rm{Q}$ are the full-precision and quantized networks, respectively, and $\mathcal{X}$ denotes the input domain of there networks. $\mathcal{L}_1$ and $\mathcal{L}_2$ are constraints for the data generation and network quantization.
\end{myDef}

According to the discussion in Section~\ref{sec:preliminaries}, we can summarize that, in generative data-free quantization methods, the key elements are synthetic data, full-precision model, and corresponding quantized model. The full-precision model is well-trained and fixed, while the quantized model and synthetic data are optimizable. Therefore, we abstract and define the general generative data-free quantization process as Definition~\ref{def:dfq}. In this summarized definition, a crucial implicit premise is that the synthetic data $\mathbf{x}^s$ should reflect the input domain $\mathcal{X}$ in the process. The optimization process of the quantized model $\textrm{Q}$ will perform well only if this premise is well satisfied.
However, the input domain $\mathcal{X}$ is not explicit in the process, and the real data, which is usually regarded as the ideal set reflecting the input domain, even cannot be obtained.


\begin{myLem}
\label{lem:app1}
For any input domains $\mathcal{X}$ that includes multiples classes (at least 2) of samples, it can be modeled as several independent high-density $\{\mathcal{R}_{H1}, \cdots, \mathcal{R}_{Hh}\}$ and low-density $\{\mathcal{R}_{L1}, \cdots, \mathcal{R}_{Ll}\}$ sub-regions divided by possible decision surfaces, where $h\geq 1$ and $l\geq 0$.
\end{myLem}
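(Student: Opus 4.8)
The plan is to make precise the intuitive picture that realistic data concentrates on a few ``clumps'' separated by sparsely populated transition zones, and then to read off the claimed decomposition from the level sets of the data density together with the decision surfaces induced by the full-precision model $\textrm{M}$. So the proof is less a calculation than a careful packaging of the cluster / low-density-separation assumption that implicitly underlies generative data-free quantization.

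First I would equip the input domain $\mathcal{X}$ with the underlying data distribution $p$ and recall that, because $\mathcal{X}$ carries at least two classes, the classifier $\textrm{M}$ partitions $\mathcal{X}=\bigcup_{c}\mathcal{X}_c$ into class-conditional regions whose common boundary $\mathcal{S}=\bigcup_{c\neq c'}\overline{\mathcal{X}_c}\cap\overline{\mathcal{X}_{c'}}$ is the set of decision surfaces. Next I would invoke the low-density separation principle: there is a threshold $\tau>0$ with $\mathcal{S}\subseteq\{\mathbf{x}\in\mathcal{X}:p(\mathbf{x})<\tau\}$. Setting $\mathcal{R}_H:=\{p\geq\tau\}$ and $\mathcal{R}_L:=\{p<\tau\}$ and decomposing each into its connected components yields $\mathcal{R}_H=\bigsqcup_{j=1}^{h}\mathcal{R}_{Hj}$ and $\mathcal{R}_L=\bigsqcup_{j=1}^{l}\mathcal{R}_{Lj}$, which is precisely the asserted family of high- and low-density sub-regions.

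I would then check the three remaining assertions. (i) $h\geq1$: since $p$ is a density on $\mathcal{X}$ we have $\sup_{\mathbf{x}}p(\mathbf{x})>0$, so choosing $\tau$ below that value makes the super-level set nonempty and gives at least one high-density component; $l\geq0$ is automatic, with $l=0$ exactly when the density never drops below $\tau$ (an essentially uniform distribution on a compact domain). (ii) Independence: the $\mathcal{R}_{Hj}$ and $\mathcal{R}_{Lj}$ are pairwise disjoint by construction, so the restricted measures $p|_{\mathcal{R}_{Hj}}$ are mutually singular and a draw from one sub-region carries no information about the others, which is the notion of ``independent'' needed downstream. (iii) Division by decision surfaces: since $\mathcal{S}\subseteq\mathcal{R}_L$, each high-density component lies inside a single class region $\mathcal{X}_c$, hence the decision surfaces appear exactly among the interfaces separating the enumerated sub-regions — while other interfaces between same-class high-density clumps are merely low-density gaps, matching the word ``possible'' in the statement.

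The main obstacle is that this is a modeling claim rather than a purely formal one: its content rests entirely on the low-density separation assumption linking $\textrm{M}$'s decision surfaces to the geometry of $p$. I would therefore devote the most care to stating that assumption explicitly (and remarking that it is the de facto hypothesis of generative data-free quantization, since the BN statistics of $\textrm{M}$ are what encode $p$ in the first place), and to the degenerate corners — $h=1,\ l=0$, or adjacent class regions with no low-density buffer — where I would note that an adaptively chosen $\tau$, or simply an empty low-density family, keeps the decomposition valid with $h\geq1$ and $l\geq0$.
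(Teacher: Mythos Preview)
Your approach is correct but takes a genuinely different route from the paper. The paper argues by contradiction: assuming the decomposition fails, it treats two cases --- no low-density sub-region exists, or at most one high-density sub-region exists --- and derives a contradiction in each from the two assumptions stated just before the lemma. From the \emph{low-density assumption} (decision surfaces must pass through low-density zones), the absence of any low-density area would forbid any decision surface and hence force a single class; from the \emph{smoothness assumption} (same label along high-density paths), the absence of high-density regions would make every two points differently labeled, while a single high-density clump would yield only one meaningful class. Your argument is instead direct and constructive: you threshold the data density $p$ at a level $\tau$ chosen so that the decision surfaces $\mathcal{S}$ sit in the sub-level set, and read off the sub-regions as connected components of $\{p\geq\tau\}$ and $\{p<\tau\}$.

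What each buys: the paper's contradiction argument makes the role of the smoothness assumption explicit (you invoke only low-density separation) and in effect establishes the stronger bounds $h\geq 2$, $l\geq 1$ that the downstream theorem really uses. Your construction, by contrast, matches the stated bounds $h\geq 1$, $l\geq 0$ exactly, handles the degenerate corners you flag, and gives a concrete meaning to ``independent'' via mutual singularity of the restricted measures. Both are modeling arguments resting on the same cluster hypothesis; yours reads more like an explicit construction, the paper's more like a check that the hypothesis is non-vacuous once at least two classes are present.
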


Therefore, for generative data-free quantization, it is necessary to study how to allow synthetic samples to reflect the properties of an unknown input domain well. Fortunately, there are two classical assumptions about the input domain concluding multiple classes of data in deep learning, and they help reveal the properties of the ideal set of synthetic samples.
(1) \textit{Low-density assumption}. The assumption indicates that the boundary of class regions should be more likely to avoid the high-density centers and pass through the low-density areas. 
(2) \textit{Smoothness assumption}. The assumption states that for two points in the input domain that are close by in the input space, the corresponding labels should be the same.
As the Lemma~\ref{lem:app1}, these assumptions allow us to model the potential input domain as several independent sub-regions, and we provide the proof of this modeling in Appendix~\ref{app:lem1}.

When we model a potential input domain $\mathcal{X}$ in generative data-free quantization in this way, consider a discrete set of synthetic samples $\mathbf{x}^s=\{x^s_0, x^ s_1, \cdots, x^s_N\}$, we can intuitively get that a necessary condition for the sample set to well reflect the input domain is to well reflect all sub-regions of the input domain, because each sub-regions are divided as independent and are not representative for others. 
And for the sample generation, the specific shape and characteristics of the potential input domain are uncertain, so the discussion of the input domain should face all possible input domains rather than a specific one.

\begin{myTheo}
\label{th:dfq}
Given a set of all possible input domains $\mathbf{X}=\{\mathcal{X}_0, \mathcal{X}_1, \cdots \}$, whose $i$-th element can be denoted as $\mathcal{X}_i$ with scale $V^i$ and consists of several sub-regions $\{\mathcal{R}^i_1, \cdots, \mathcal{R}^i_{K^i}\}$ with scales $\{V^i_1, \cdots, V^i_{K^i}\}$, and the number $K^i\geq 2$ is unknown yet limited. Consider a sample set $\mathbf{x}^s=\{x_0^s,\cdots , x_N^s\} \subset \mathcal{X}^*$, where $\mathcal{X}^*=\mathbb{E}(\mathbf{X})$ denotes the potential input domain and the differences inside each sub-region of $\mathcal{X}$ is neglected. When the set $\mathbf{x}^s$ satisfies that for $\forall x_i^s\in\mathbf{x}^s$, $p (x_i^s\in \mathcal{R}^*_j)=\frac{V^*_j}{V^*}$, the information reflecting from all possible input domains $\mathbf{X}$ by the sample set $\mathbf{x}^s$ will be the maximized in mathematical expectation, where $V^*=\sum_{k=0}^{K^*}V^*_k$. 
\end{myTheo}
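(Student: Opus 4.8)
The plan is to turn the informal phrase ``information reflecting from all possible input domains by the sample set'' into a concrete functional of the sampling law, to recognize the resulting problem as that of choosing a single distribution that is, in expectation over $\mathbf{X}$, simultaneously as close as possible to a whole family of target distributions, and then to invoke the elementary fact that the minimizer of an expected squared deviation --- equivalently, the distribution that minimizes an expected cross-entropy --- is the mean. Throughout, I read the condition ``$p(x^s_i\in\mathcal{R}^*_j)=V^*_j/V^*$'' as a statement about the common marginal law $q=(q_1,\dots,q_{K^*})$, $q_j:=p(x^s_i\in\mathcal{R}^*_j)$, of the (i.i.d.) samples, so that $\mathbf{x}^s$ is determined in distribution by $q$ and optimizing over $\mathbf{x}^s$ amounts to optimizing over $q$ on the probability simplex.

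First I would collapse each possible domain to a categorical distribution over its sub-regions. By Lemma~\ref{lem:app1} together with the smoothness assumption --- which is exactly why the statement neglects differences inside each sub-region --- a domain $\mathcal{X}_i$ is, for the purpose of ``being reflected'', completely described by its volume profile, i.e.\ by the distribution $u_i$ with $u_i(j)=V^i_j/V^i$; this is just the uniform law on $\mathcal{X}_i$ written in sub-region coordinates. To make $\mathbb{E}_{\mathcal{X}_i\sim\mathbf{X}}[u_i]$ meaningful I align the sub-region index sets across domains (padding shorter lists with zero-volume regions up to a common finite $K^*$) and take the total scales $V^i$ to be comparable across domains; then, with $V^*_j:=\mathbb{E}_i[V^i_j]$ and $V^*:=\sum_k V^*_k$, the average $\mathbb{E}_i[u_i]$ is precisely the uniform law on $\mathcal{X}^*=\mathbb{E}(\mathbf{X})$, carrying mass $V^*_j/V^*$ on $\mathcal{R}^*_j$. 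Next I would define the information reflected from one fixed domain: a sample set drawn from $q$ reflects $\mathcal{X}_i$ well exactly to the degree that sampling from $q$ behaves like sampling uniformly from $\mathcal{X}_i$ --- any sub-region that $q$ under-weights relative to $V^i_j/V^i$ is proportionally under-represented, and (as argued just before the statement) the sub-regions are independent and non-substitutable. I therefore set $\mathcal{I}(q;\mathcal{X}_i)=C-\|q-u_i\|_2^2$ for a fixed constant $C$ (one could equally take $C-D_{\mathrm{KL}}(u_i\,\|\,q)$ and reach the same conclusion); note that for a \emph{known} $\mathcal{X}_i$ this is maximized exactly at $q=u_i$, which is the sharp form of the intuition that, were the domain known, uniform sampling over it would be ideal.

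The optimization is then short. Since the true domain is unknown, the quantity to be maximized is $\mathbb{E}_{\mathcal{X}_i\sim\mathbf{X}}[\mathcal{I}(q;\mathcal{X}_i)]=C-\mathbb{E}_i\|q-u_i\|_2^2$, and the bias--variance identity $\mathbb{E}_i\|q-u_i\|_2^2=\|q-\mathbb{E}_i u_i\|_2^2+\mathbb{E}_i\|u_i-\mathbb{E}_i u_i\|_2^2$ isolates the single $q$-dependent term, so the expected information is maximized exactly at $q^\star=\mathbb{E}_i[u_i]$. (For the cross-entropy version one discards the $q$-independent average-entropy term and maximizes $\sum_j(\mathbb{E}_i u_i)_j\log q_j$ over the simplex; one Lagrange multiplier, or Gibbs' inequality, again gives $q^\star=\mathbb{E}_i[u_i]$.) By the identification above, $q^\star_j=V^*_j/V^*$, i.e.\ the sampling law $p(x^s_i\in\mathcal{R}^*_j)=V^*_j/V^*$ is the one that maximizes, in mathematical expectation over $\mathbf{X}$, the information reflected by $\mathbf{x}^s$, which is the assertion.

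The genuinely load-bearing step is the modeling of ``information'', not the optimization: the latter is the textbook fact that the mean minimizes expected squared deviation (equivalently, that cross-entropy is minimized by the averaged distribution), so all of the content sits in choosing an information functional that is at once faithful to the stated intuition --- reflect every independent sub-region, and hedge against every possible domain --- and such that its per-domain maximizer is the uniform law $u_i$, so that the expectation-maximizer is the mixture $\mathbb{E}_i[u_i]$. One should also be explicit about the two conventions hidden inside ``$\mathcal{X}^*=\mathbb{E}(\mathbf{X})$'': a correspondence between sub-regions of domains with possibly different $K^i$, and comparability of the total scales $V^i$, without which $\mathbb{E}_i[V^i_j/V^i]$ and $V^*_j/V^*$ need not coincide; stating these as hypotheses is the cleanest remedy.
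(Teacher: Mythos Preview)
Your argument is correct but follows a genuinely different route from the paper's. The paper does not minimize an expected divergence to the per-domain uniforms $u_i$; instead it first argues, by a symmetry/randomization step, that in the potential domain $\mathcal{X}^*=\mathbb{E}(\mathbf{X})$ all sub-region scales coincide, $V^*_j=V^*/K^*$ for every $j$, and then maximizes the \emph{Shannon entropy of the sampling law itself}, $\mathcal{H}(q)=-\sum_j q_j\log q_j$, via a Lagrange multiplier to obtain $q_j=1/K^*$, which is finally identified with $V^*_j/V^*$ only because all $V^*_j$ were shown equal. Your formalization --- set $\mathcal{I}(q;\mathcal{X}_i)=C-\|q-u_i\|_2^2$ (or the cross-entropy analogue), average over $\mathbf{X}$, and read off $q^\star=\mathbb{E}_i[u_i]$ from the bias--variance identity --- ties the objective more directly to the phrase ``information reflecting from all possible input domains'' and yields $q^\star_j=V^*_j/V^*$ without ever needing the equal-volume conclusion; the paper's approach, in exchange, avoids your explicit modeling assumptions (index alignment and comparable total scales) by pushing all of that into the single symmetry claim $V^*_i=V^*_j$. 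Your final paragraph is apt: in both proofs the optimization is a one-line exercise and the real content is the choice of ``information'' functional, and on that point your version is at least as faithful to the stated intuition as the paper's plain entropy.
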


Theorem~\ref{th:dfq} shows that the optimized strategy to make synthetic samples adequately reflect the unknown input domain in expectation is to correlate the distribution of samples with the spatial scales of the sub-regions, \textit{i.e.}, dispersing synthetic samples uniformly in the potential input domain instead of gathering collapse. We present the proof of the theorem in Appendix~\ref{app:th1}.
In practice, this means that synthetic samples should be distributed as dispersed in the domain as possible while satisfying given constraints, which is often described as the diversification of synthetic samples.

\subsection{Homogenization of Synthetic Data}
\label{sec:homogenization}

The generative data-free quantization well sidesteps the issue of lacking accessibility of real data, but we have to concede the huge performance drop compared with networks quantized with real data. 
As the theoretical analysis in Section~\ref{sec:why}, we show that the diversity of synthetic data is beneficial for generative data-free quantization, thus we attempt to find the performance bottleneck of existing methods from this aspect.
By experimental analysis for the typical generative data-free PTQ and QAT methods with the almost independent generation process (ZeroQ and GDFQ, respectively), we discover the homogenization phenomena from the existing image synthesizing methods in data-free quantization, which degrade the fidelity and quality of synthetic data. We conclude the homogenization of synthetic data into the distribution level and the sample level to elaborate on them.

\subsubsection{Distribution-level Homogenization} 

Since the performance of existing data-driven quantization significantly surpasses that of data-free quantization in usual, the original training data (real data) is generally considered as the upper limit of the performance of synthetic data. 
So as Eq.~(\ref{eq:BNS}) shows, methods constrained by $\mathcal{L}_\textrm{BN}$ are dedicated to generating samples by confining the feature statistics of synthetic data around BN statistics of the pre-trained model. 
However, the data from the real world is always more diverse, while that generated by fitting BN statistics suffers constrained features as shown in Fig.~\ref{fig:distribution_a} and Fig.~\ref{fig:distribution_c}. The activation distribution of data generated in ZeroQ and GDFQ well fit the normal distribution of BN statistics while that of real data deviates from BN statistics.
When we calculate the Wasserstein distance between the Gaussian distribution with BN statistics and the feature distribution of synthetic data, the index values of the existing methods are significantly smaller than the real data (real data 0.120 vs ZeroQ 0.040 in PTQ, and real data 0.134 vs GDFQ 0.116 in QAT).
We consider it as distribution-level homogenization causing by the strict constraint applied to the synthesizing process makes the synthetic data overfit BN statistics in each layer. 

\subsubsection{Sample-level Homogenization} 

Besides homogenization at the distribution level, synthetic data in existing generative data-free quantization also suffer the homogenization at the sample level.

\textbf{From the statistical perspective}, in existing generative data-free methods, such as ZeroQ and GDFQ, all samples are constrained by the identical form of BN statistics loss. So there might be little difference among statistics of synthetic samples in one batch, while samples from the real world are diverse. As shown in Fig.~\ref{fig:distribution_b} and Fig.~\ref{fig:distribution_d}, the points of the mean and standard deviation of ZeroQ and GDFQ data are significantly overlapping and all points gather together closely near BN statistics. However, statistics of real data and our DSG data have larger variance and are more dispersed.
Taking PTQ as an example, the variance of the statistics of the real sample is 0.029, that much larger than the variance of data generated by ZeroQ, which is 0.009.

\textbf{From the spatial perspective}, the synthetic samples generated in different ways have different behaviors.
For existing data-free methods, all samples are synthesized by the same generation strategy. The samples generated by GDFQ are concentrated in spatial perspective, as shown in Fig.~\ref{fig:space_f}, while samples of our DSG and real data are more scattered.
The heatmap shows the degree of aggregation in the synthetic data sample space. The densest cluster of GDFQ samples appears red in the heatmap and the density index is up to 2052 near the cluster center, while the highest density index of the real sample is only 1681.
For PTQ, the index is 1902, and its homogenization phenomenon is not significant as QAT but is more severe than the real data.
We regard these as sample-level homogenization from the statistical and spatial perspectives, which leads to a significant performance drop for the networks quantized by synthetic data.

In a nutshell, for the generation process in data-free quantization, the homogenization exists in two levels, \textit{i.e.}, distribution level and sample level. Thus the models quantized by these homogenized data can hardly achieve high accuracy. In this work, we devote diversify the synthetic data to improve the generative data-free quantization. We propose a generic data-free quantization scheme to tackle the homogenization problems by diversifying synthetic data from different perspectives, which we call Diverse Sample Generation (DSG) scheme. The synthetic data generated by our scheme enables the quantized neural network to achieve high accuracy.

\subsection{Diverse Sample Generation}

We present a generic Diverse Sample Generation (DSG) scheme for both generative data-free PTQ and QAT approaches (Fig.~\ref{fig:structure}), including three novel techniques to alleviate homogenization: Slack Distribution Alignment in Section~\ref{sec:slack} is to relax the statistical constraint and thus mitigate distribution-level homogenization, while Layerwise Sample Enhancement in Section~\ref{sec:layerwise} and Sample Correlation Inhibition in Section~\ref{sec:correlation} diversify the synthetic data at the sample level from distribution and spatial perspectives, respectively.

\subsubsection{Slack Distribution Alignment}
\label{sec:slack}

To deal with the distribution-level homogenization, we propose Slack Distribution Alignment (SDA). Specifically, we apply the relaxation constants during the generation process for matching original BN statistics, which can be intuitively regarded as margins for means and standard deviation. With the relaxation constants, the distribution statistics of synthetic samples do not need to fit the BN statistics strictly, and thus the data can be more diverse in distribution. The loss term of SDA is $\mathbf{l}_\textrm{SDA}=[{l}_{\textrm{SDA}_1}, {l}_{\textrm{SDA}_2}, \dots, {l}_{\textrm{SDA}_N}]^T$, where $N$ is the number of BN layers in the quantized neural network, and the ${l}_{\textrm{SDA}_i}$ for $i$-th BN layer is expressed as follow:
\begin{equation}
\label{eq:SDA}
\begin{aligned}
l_{\textrm{SDA}_i}=&
\left\|\max\left(|\boldsymbol{\tilde{\mu}}_{i}^s-\boldsymbol{\mu}_{i}|-\delta_i, 0\right)\right\|_{2}^{2}\\
+&\left\|\max\left(|\boldsymbol{\tilde{\sigma}}_{i}^s-\boldsymbol{\sigma}_{i}|-\gamma_i, 0\right)\right\|_{2}^{2},
\end{aligned}
\end{equation}
where $\delta_i$ and $\gamma_i$ denote the relaxation constants for the mean and standard deviation statistics of features at the $i$-th BN layers, respectively.
After introducing relaxation to the constraint between statistics of synthetic data and BN layers, the mean and standard deviation of generated samples can fluctuate within a certain range. Benefiting from the relaxed constraints, the synthetic samples are more diverse and their feature distribution behaves closer to that of real data.

As the ideal data to quantize networks, our experimental evidence in Fig.~\ref{fig:distribution} shows that the feature statistics of real data still exist gaps compared with BN statistics. Thus, we attempt to approximate the gap between the statistics of real data and BN statistics of the original model as a reference to determine the value of $\delta_i$ and $\gamma_i$.
For real-world data, when the number of samples achieves a huge amount, the components and features of inputs approximate the Gaussian assumption. Based on the central limit theorem, we suppose the whole potential input set is generally consistent with a Gaussian distribution. 

Thus, we set the initial value of $\delta_i$ and $\gamma_i$ by a batch of samples that were randomly initialized from the Gaussian distribution: 
First, we input 1024 samples initialized by Gaussian distribution with $\mu=0$ and $\sigma=1$ into the full-precision network and then save the feature statistics at each BN layer, including the mean and standard deviation of the feature distribution. 
Then, we measure the margin between saved statistics and the corresponding BN statistics, then take percentiles of the margin to initialize $\delta_i$ and $\gamma_i$: 
\begin{equation}
\label{eq:mn}
\begin{aligned}
\delta_i=|\boldsymbol{\tilde\mu}^{0}_{i}-\boldsymbol{\mu}_{i}|_{\epsilon} \quad
\gamma_i=|\boldsymbol{\tilde\sigma}^{0}_{i}-\boldsymbol{\sigma}_{i}|_{\epsilon},
\end{aligned}
\end{equation}
where $\epsilon \in (0, 1]$ is a hyper-parameter determining the degree of relaxation, and $\boldsymbol{\tilde\mu}^{0}_{i}/\boldsymbol{\tilde\sigma}^{0}_{i}$ are mean/standard deviation of the feature of Gaussian initialized data $\mathbf{x}^{0}$ at the $i$-th BN layer.
$|\boldsymbol{\tilde\mu}^{0}_{i}-\boldsymbol{\mu}_{i}|_{\epsilon}$ and $|\boldsymbol{\tilde\sigma}^{0}_{i}-\boldsymbol{\sigma}_{i}|_{\epsilon}$ denote the $\epsilon$ percentile of $|\boldsymbol{\tilde\mu}^{0}_{i}-\boldsymbol{\mu}_{i}|$ and $|\boldsymbol{\tilde\sigma}^{0}_{i}-\boldsymbol{\sigma}_{i}|$, respectively. 
Intuitively, the larger the value of $\epsilon$, the more relaxing the constraints are in Eq.~(\ref{eq:SDA}). 
We empirically set the default value of $\epsilon$ as $0.9$ to deal with outliers.

\begin{figure*}[t]
\centering
\includegraphics[width=1\linewidth]{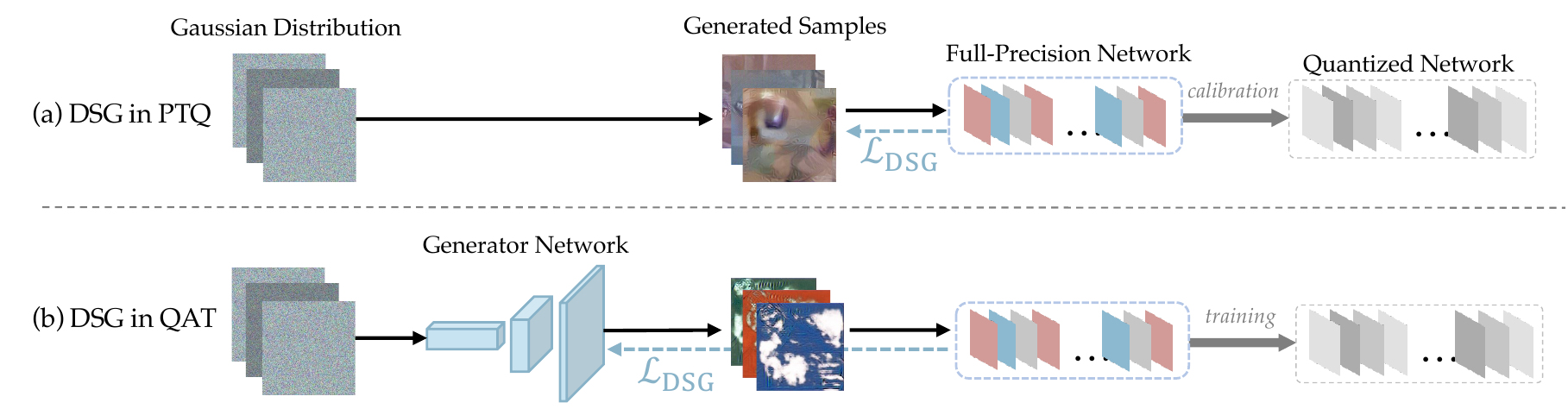}
\vspace{-0.2in}
\caption{The procedures of our DSG for specific quantization approaches. The case (a) is the DSG in PTQ and the case (b) is in QAT.}
\label{fig:specific-arch}
\end{figure*}

\subsubsection{Layerwise Sample Enhancement}
\label{sec:layerwise}

For the generation processes in existing generative data-free quantization approaches, all synthetic samples are always generated by the same initialization and optimization strategy. Specifically, the samples share the same objective function that directly sums the loss of each layer, \textit{i.e.}, all the BN layers receive the same degree of attention from each sample, and thus results in homogenization among the synthetic samples.
We propose Layerwise Sample Enhancement (LSE) to tackle sample-level homogenization. LSE enables each sample to focus on the statistics of different layers by enhancing the specific loss term of the sample in the optimization process so that the synthetic samples of our DSG in one batch could be diverse.

Given a well-trained full-precision network containing the $N$ number of BN layers, earlier generative data-free quantization methods treat each layer the same way without particular predilection or bias. But in LSE, we engineer $N$ different loss terms and apply each of them to the specific data sample.
Here, we suppose to generate $N$ samples in one batch, equaling the number of BN layers of the quantized network. The loss term of LSE for the $i$-th sample is as follows:
\begin{equation}
\label{eq:LSE}
{l}_{\textrm{LSE}_i}=\frac{1}{N}(\mathbf{1}+\textbf{h}_i)^T\mathbf{l},
\end{equation}
where $\mathbf{1}$ denotes a column vector of all ones, $\mathbf{h}_i$ is an ${N}$-dim one-hot column vector where the $1$ is in the $i$-th position, $\mathbf{l}=[l_1, l_2, ..., l_N]^T$ is the vector of the original loss terms for all BN layers.
The set of loss functions for all samples can be expressed as $\mathbf{X}_\textrm{LSE}\mathbf{l}=[{l}_{\textrm{LSE}_1}, {l}_{\textrm{LSE}_2}, ..., {l}_{\textrm{LSE}_N}]^T$, which is an $N$-dim column vector and its $i$-th element represents the loss function for $i$-th sample in the batch.
Thus, for the whole set of samples $\mathbf{x}^s$, the effect exerted by LSE can be formulated as
\begin{equation}
\label{eq:LSE_expand}
\mathbf{X}_\textrm{LSE} = \frac{1}{N}\left(\mathbf{1}\mathbf{1}^T+[\textbf{h}_1, \textbf{h}_2, ..., \textbf{h}_N]\right),
\end{equation}
where $\mathbf{X}_\textrm{LSE}$ can be seen as the enhancement matrix for loss terms. 
In this way, we focus on every BN layer of the original model respectively and generate corresponding samples for each layer. By such layerwise enhancement, we optimize the whole batch of synthetic samples simultaneously.

\subsubsection{Sample Correlation Inhibition}
\label{sec:correlation}

The above two proposed methods are based on the relaxation of generation constraints, however, the state of samples at the spatial level still cannot be perceived and adjusted directly during the generation process. Therefore, in addition to improving the loss function constraining the statistics of synthetic data, we construct the unsupervised Sample Correlation Inhibition (SCI) loss for the samples to sample-level diversify the synthetic data from the spatial perspective.
SCI applies the determinant point process loss to spatially disperse the intermediate features in the generator, thereby inhibiting the correlation among these synthetic samples.


The determinant point process $\mathcal{P}$ can express negative interactions among samples by the similarity kernel $\mathbf{K}_{\mathbf{S}}$ of the elements in a set ${\mathbf{S}}$, where a large value of $\mathbf{K}_{i j} ; i, j \in {\mathbf{S}}$ reduces the likelihood of both elements to appear together in a diverse subset~\cite{borodin2009determinantal,kulesza2012determinantal}. 
The related works are widely used to generate or select diverse samples in some existing generation tasks for diversifying samples closer to the real data~\cite{elfeki2019gdpp,chen2018fast,li2017mmd,liu2021random}.
However, real data is inaccessible in generative data-free quantization, and thereby it cannot be used to construct the constraint of the diversity of synthetic data.

Therefore, we utilize random samples in our SCI instead of the real data, and the random data is applied as a lower bound of diversity to alleviate the homogenization of synthetic samples. 
Inspired by Theorem~\ref{th:dfq}, each dimension of these vectors is initialized by the uniform distribution $U[0, 1)$ independently and randomly. The vectors initialized in this way are for uniforming approximately on spatial, which are fixed during the quantization process and expected to compose a base set that covers the potential input domain uniformly and is de-homogenized.
In this way, the homogenization of synthetic data is limited to a level lower than its initial state.

So far we are able to build the complete flow of our SCI.
The intermediate features correspond to the data $\mathbf{x}^s$ in the generation process is expressed as $\mathbf{f}=\{f_1, f_2, ..., f_N\}$, where $f_i$ is the feature of the corresponding synthetic sample.
And for the set of features $\mathbf{f}$, we construct a set of noise vectors $\mathbf{r}=\{r_1, r_2, ..., r_N\}$ of the same size. 
Then we construct the similarity kernels $\mathbf{K}_{\mathbf{f}}$ and $\mathbf{K}_{\mathbf{r}}$ of the features $\mathbf{f}$ and noise vector set $\mathbf{r}$ as the following decomposition proposed in \cite{elfeki2019gdpp}: $\mathbf{K}_{\mathbf{f}}={\phi(\mathbf{f}})^{\top} \phi(\mathbf{f})$ and $\mathbf{K}_{\mathbf{r}}={\phi(\mathbf{r})}^{\top} \phi(\mathbf{r})$, where all $\phi({f}_i)\in\phi(\mathbf{f})$ and $\phi({r}_i)\in\phi(\mathbf{r})$ are the $\ell_2$ normalized vectors that guarantees the similarity kernels $\mathbf{K}_{\mathbf{f}}$ and $\mathbf{K}_{\mathbf{r}}$ to be real positive semidefinite matrices.
The main idea of our SCI is to learn a similarity kernel $\mathbf{K}_{\mathbf{f}}$ of feature $\mathbf{f}$ and make it less than the similarity kernel $\mathbf{K}_{\mathbf{r}}$ of random data $\mathbf{r}$. 
In this way, we keep the correlation among the features of synthetic samples lower than that among random vectors, thus alleviating the homogenization of synthetic data. 
Since matching two similarity kernels directly is an unconstrained optimization problem~\cite{li2009kernel}, we construct the loss using the major characteristics: eigenvalues and eigenvectors of kernels.
Hence, our SCI loss term is defined as follows:
\begin{equation}
l_{\textrm{SCI}}=\max\left(\sum_{i}\left(\left\|\lambda_{\mathbf{f}}^{i}-\lambda_{\mathbf{r}}^{i}\right\|_{2}-\hat{\lambda}_{\mathbf{f}}^{i} \cos \left(v_{\mathbf{f}}^{i}, v_{\mathbf{r}}^{i}\right)\right), 0\right),
\end{equation}
where $\lambda_{\mathbf{f}}^{i}$ and $\lambda_{\mathbf{r}}^{i}$ are the $i^{\textrm{th}}$ eigenvalues of $\mathbf{K}_{\mathbf{f}}$ and $\mathbf{K}_{\mathbf{r}}$, respectively. $v_{\mathbf{f}}^{i}$ and $v_{\mathbf{r}}^{i}$ are the eigenvectors, and $\hat{\lambda}_{\mathbf{r}}^{i}$ is the min-max normalized version of the eigenvalues applied to alleviate the effect of noisy structures.

\subsection{Specialization for different quantization approaches}

The motivation of our DSG is to improve the performance of various generative data-free quantization methods by proposing a novel and generic data generation scheme.
Therefore, here we derive and present the most typical specializations of DSG for generative data-free PTQ and QAT approaches (Fig.~\ref{fig:specific-arch}).


\textbf{For the data-free PTQ approach}, DSG applies SDA to each layer for relaxed BN statistics loss terms, integrates LSE to introduce diverse degrees of attention among different samples, and uses SCI to diversify the distribution of samples in the potential input domain. 
Therefore, the overall loss $\mathcal{L}_\textrm{DSG-PTQ}$ can be defined as follows involving the above techniques:
\begin{equation}
\label{eq:LSE_expand_2}
\begin{aligned}
\mathcal{L}_\textrm{DSG-PTQ}=\left(\mathbf{X}_\textrm{LSE}\mathbf{l}_\textrm{SDA}\right)^T\mathbf{1} + l_{\textrm{SCI}}.
\end{aligned}
\end{equation}
This optimization loss $\mathcal{L}_\textrm{DSG-PTQ}$ is used to optimize synthetic data directly. After the generation process is complete, the optimized synthetic data is applied to calibrate the quantized neural network. The calibration process is usually completely separate from the generation process, and the typical calibration methods include \cite{cai2020zeroq} and \cite{guo2022squant}.
The DSG for PTQ is elaborated in Algorithm~\ref{alg:1}, and it is summarized in Fig.~\ref{fig:specific-arch} as the case (a).


\renewcommand{\algorithmicrequire}{\textbf{Input:}}
\renewcommand{\algorithmicensure}{\textbf{Output:}}
\begin{algorithm}[t]
    \small
    \caption{The process of DSG scheme in PTQ}
    \label{alg:1}
    \KwIn{pretrained model $\textrm{M}$ with $N$ BN layers, training iterations $T$.}
    \KwOut{synthetic data: $\mathbf{x}^s$, quantized network $\textrm{Q}$.}
    Initialize $\mathbf{x}^s$ from Gaussian distribution $\mathcal{N}(0,1)$\;
    Initialize $\mathbf{x}^{0}$ from Gaussian distribution $\mathcal{N}(0,1)$\;
    Get $\boldsymbol{\mu}_{i}$ and $\boldsymbol{\sigma}_{i}$ from BN layers of $\textrm{M}$, $i=1,2,\dots, N$\;
    Forward propagate $\textrm{M}(\mathbf{x}^{0})$ and gather activations\;
    Compute $\delta_i$ and $\gamma_i$ using Eq.~(\ref{eq:mn})\;
    \For{all $t=1,2,\dots, T$}
    {
    Forward propagate $\textrm{M}(\mathbf{x}^s)$ and gather activations\;
    Get $\boldsymbol{\tilde{\mu}}_{i}^s$ and $\boldsymbol{\tilde{\sigma}}_{i}^s$ from activations\;
    Compute all ${l_\textrm{SDA}}_i$ using Eq.~(\ref{eq:SDA})\;
    Descend $\mathcal{L}_\textrm{DSG-PTQ}$ using Eq.~(\ref{eq:LSE_expand_2}) and update data $\mathbf{x}^s$\;
    }
    Get synthetic data $\mathbf{x}^s$\;
    Calibrate quantized network $\textrm{Q}$ using data $\mathbf{x}^s$.
\end{algorithm}




\textbf{For the data-free QAT approach}, DSG applies an alternating optimization strategy to update the generator $\textrm{G}$ and the quantized neural network $\textrm{Q}$.
As shown in the Eq.~(\ref{eq:gen}), we initialize a set of random noise samples and specified labels, feeding it in generator $\textrm{G}$ to generate the synthetic data and then training the generator $\textrm{G}$.
The loss function $\mathcal{L}_\textrm{DSG-QAT}$ for the generator can be expressed as
\begin{equation}
\label{eq:dsg-qat-g}
\mathcal{L}_\textrm{DSG-QAT} = \left(\mathbf{X}_\textrm{LSE}\mathbf{l}_\textrm{SDA}\right)^T\mathbf{1} + \mathbf{l}_{\textrm{SCI}}^T\mathbf{1} + \mathcal{L}_\textrm{CE}^\textrm{G}(\textrm{G}),
\end{equation}
where $\mathbf{l}_{\textrm{SCI}}=\frac{1}{B}[l_{\textrm{SCI}_1}, l_{\textrm{SCI}_2}, ..., l_{\textrm{SCI}_B}]^T$ in Eq.~(\ref{eq:dsg-qat-g}) is the SCI loss terms for intermediate features of each blocks ($B$ blocks in total) in generator $\textrm{G}$. 
$\mathcal{L}_\textrm{CE}^\textrm{G}(\textrm{G})$ is the cross-entropy loss.

The data synthesized by generator $\textrm{G}$ is applied to train the quantized neural network according to the usual practice~\cite{xu2020generative}, aiming to close the prediction probability distribution of the quantized neural network to that of the full-precision model.
In the quantization process, DSG train the generator $\textrm{G}$ and the quantized neural network alternately in every epoch, and the pre-trained full-precision network is fixed.
The DSG for QAT is elaborated in Algorithm~\ref{alg:2} and is further illustrated in Fig.~\ref{fig:specific-arch} as the case (b). 

\renewcommand{\algorithmicrequire}{\textbf{Input:}}
\renewcommand{\algorithmicensure}{\textbf{Output:}}
\begin{algorithm}[t]
    \small
    \caption{{The process of DSG scheme in QAT}}
    \label{alg:2}
    \KwIn{pretrained model $\textrm{M}$ with $N$ BN layers, training iterations $T$.}
    \KwOut{generator network $\textrm{G}$, quantized network $\textrm{Q}$.}
    Initialize $\mathbf{x}^{s*}$ from Gaussian distribution $\mathcal{N}(0,1)$ and specify label $y$\;
    Get $\boldsymbol{\mu}_{i}$ and $\boldsymbol{\sigma}_{i}$ from BN layers of $\textrm{M}$, $i=1,2,\dots, N$\;
    Forward propagate $\textrm{M}(\mathbf{x}^{s*})$ and gather activations\;
    Compute $\delta_i$ and $\gamma_i$ using Eq.~(\ref{eq:mn})\;
    
    \For{all $t=1,2,\dots, T$}
    {
    Generate data $(\mathbf{x}^s, y)$ and gather activations\;
    Forward propagate $\textrm{M}(\mathbf{x}^s)$ and gather activations\;
    Descend $\mathcal{L}_\textrm{DSG-QAT}$ using Eq.~(\ref{eq:dsg-qat-g}) and update $\textrm{G}$\;
    Update the quantized network $\textrm{Q}$ using data $(\mathbf{x}^s, y)$\;
    }
    Train generator network $\textrm{G}$ and generate data $(\mathbf{x}^s, y)$\;
    Train quantized network $\textrm{Q}$ using data $(\mathbf{x}^s, y)$.
\end{algorithm}



\subsection{Analysis and Discussion}

The techniques in our DSG scheme, SDA, LSE, and SCI, alleviate the homogenization of synthetic data at the distribution level and sample level. 
In Fig.~\ref{fig:distribution}, we statistically observe the behavior of the three types of data, \textit{i.e.}, data generated by our DSG scheme, data generated by other SOTA generative data-free quantization methods and real data. 
Here we analyze and discuss the effect of our DSG scheme in diversifying synthetic data in generative data-free quantization methods. 

We first analyze the synthetic data generated by our DSG scheme in PTQ.
Compared with the data generated by the existing generative data-free PTQ method (ZeroQ), the behavior of samples generated with our DSQ is more similar to that of real data statistically.  
As shown in Fig.~\ref{fig:distribution_a}, the distribution of our DSG data does not strictly fall in close vicinity to BN mean and standard deviation, instead, they are more various in value and thus more fluctuate in frequency. As introduced in Section~\ref{sec:homogenization}, the Wasserstein distance between BN statistics and our DSG synthetic data is 0.124 (compared with 0.040 of ZeroQ).
And in Fig.~\ref{fig:distribution_b}, the statistics of the data generated by ZeroQ are centralized, while that of the DSG samples are dispersed. The phenomenon means that our synthetic data significantly alleviates the sample-level homogenization of existing data-free generative PTQ methods from the statistical perspective. As mentioned in Section~\ref{sec:homogenization}, we calculate the variance of the statistics of our DSG data and compare it with ZeroQ in PTQ, which is 0.029 vs. 0.009. 
Fig.~\ref{fig:space_e} presents the distribution of different types of synthetic data in the sample space.
The density index near the cluster center of the synthesized sample generated by DSG is only 1571, which is even lower than that of real data 1681 and far lower than ZeroQ 1902.

For the DSG scheme in QAT, due to the utility of SDA and LSE, the distribution of our DSG data is not constrained strictly and the synthetic samples enjoy more diverse statistics, as Fig.~\ref{fig:distribution_c} and Fig.~\ref{fig:distribution_d} show. 
The SCI further weakens the correlation among samples by constructing unsupervised loss for intermediate features in the generation process to alleviate the homogenization of the generated samples in sample space. 
From Fig.~\ref{fig:space_f}, we observe that the density index of the synthesized sample generated by DSG is far lower than the existing data-free QAT method, which means that the sample-level homogenization is greatly alleviated from the spatial perspective.

Therefore, the data generated by our DSG could be more diverse from various perspectives comprehensively, which may have the potential to be an alternative for different quantization approaches that real data is not accessible.

\section{Experiments}

In this section, we conduct comprehensive experiments to validate the performance of our DSQ scheme on image classification tasks. 
We first conduct an ablation study to show the effects of different components, including SDA, LSE, and SCI in Section~\ref{sec:ablation}. 
Then in Section~\ref{sec:comparison}, we compare DSQ with SOTA data-free PTQ and QAT methods respectively across various network architectures. We evaluate the DSG on CIFAR10~\cite{krizhevsky2009learning} and ImageNet (ILSVRC12)~\cite{Deng2009ImageNet} datasets in PTQ while on ImageNet dataset in QAT. 
Finally, in Section~\ref{sec:integration}, we conduct a further study on synthetic data. Specifically, we analyze and discuss the data diversity, and integrate and evaluate our synthetic data with various calibration methods and data-driven quantization methods. The results show that good diversity is an important property of high-quality synthetic data.

\textbf{DSG scheme.} 
In the data-free PTQ, the proposed DSG scheme is used for generating synthetic data while the independent calibration processes are as \cite{cai2020zeroq} and \cite{guo2022squant}, and the effectiveness is evaluated by measuring the accuracy of quantized models.
Unless otherwise specified, the calibration process for DSG in PTQ is the same as \cite{cai2020zeroq}.
For the DSG scheme in QAT, we train the generator network to synthesize the data and use it to finetune the quantized network.
The generator in the DSG scheme in QAT is constructed following ACGAN~\cite{odena2017conditional} as in \cite{xu2020generative}.

\textbf{Network architectures.}
We evaluate our DSG scheme in a wide range of network architectures with various bit-width to prove the versatility of our method. 
We employ VGG16bn~\cite{VeryDeepConvolutional}, ResNet20/18/50~\cite{he2016deep}, SqueezeNext~\cite{gholami2018squeezenext}, InceptionV3~\cite{szegedy2016rethinking}, ShuffleNet~\cite{zhang2018shufflenet}, and MobileNetV2~\cite{sandler2018mobilenetv2} with various bit-widths, including W4A4 (means 4-bit weight and 4-bit activation), W6A6, W8A8, \textit{etc}.

\textbf{Implementation details.} 
The proposed scheme is implemented by PyTorch for the sake of the powerful automatic differentiation mechanism.
We adopt Gaussian distribution as initialization for the data generation process in our DSG scheme. In our experiments, we quantize all the layers. And the activation is clipped in a layerwise manner. 
As for hyper-parameter (\textit{e.g.}, the number of iterations to provide synthetic data), we mostly follow the released official implementations or the models and settings clarified in their original paper for a fair comparison~\cite{Zhang_2021_CVPR,cai2020zeroq,xu2020generative,guo2022squant}. 
For the training and finetuning process of DSG in QAT, we use the Adam and SGD optimizer in the experiments, where momentum is 0.9 and weight decay is 1e-4.
For CIFAR10, we train quantized networks and generators for 400 epochs. The learning rates are initialized to 1e-4 and 1e-3, respectively, and both of them are decayed by 0.1 for every 100 epochs. For ImageNet, we set the initial learning rate of the quantized model as 1e-6, and other training settings are the same as those on CIFAR10.

\begin{figure}[tp]
\subfigure[PTQ]{
\includegraphics[width=0.44\linewidth]{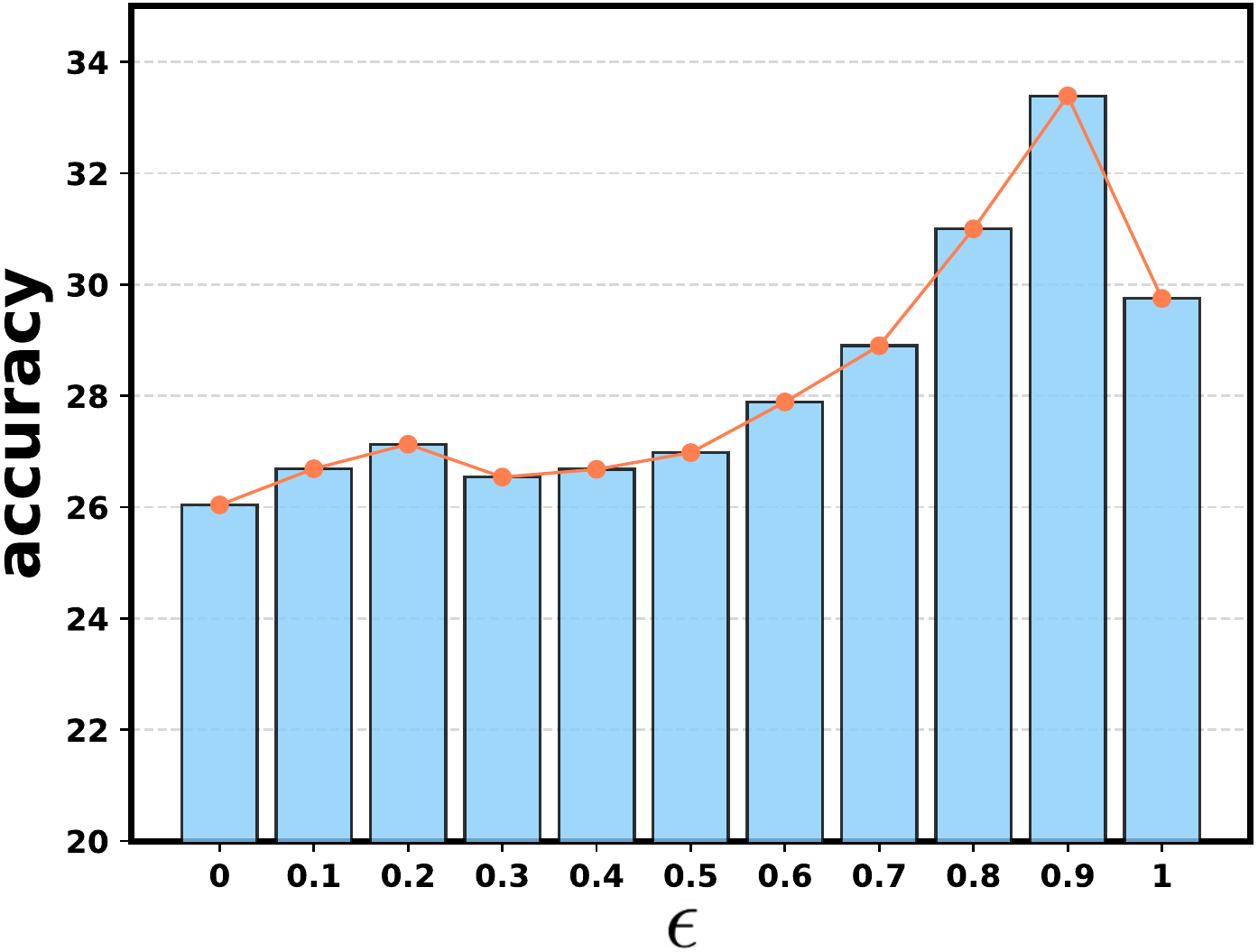}
\label{fig:Eqmn.perc_a}
}
\subfigure[QAT]{
\includegraphics[width=0.45\linewidth]{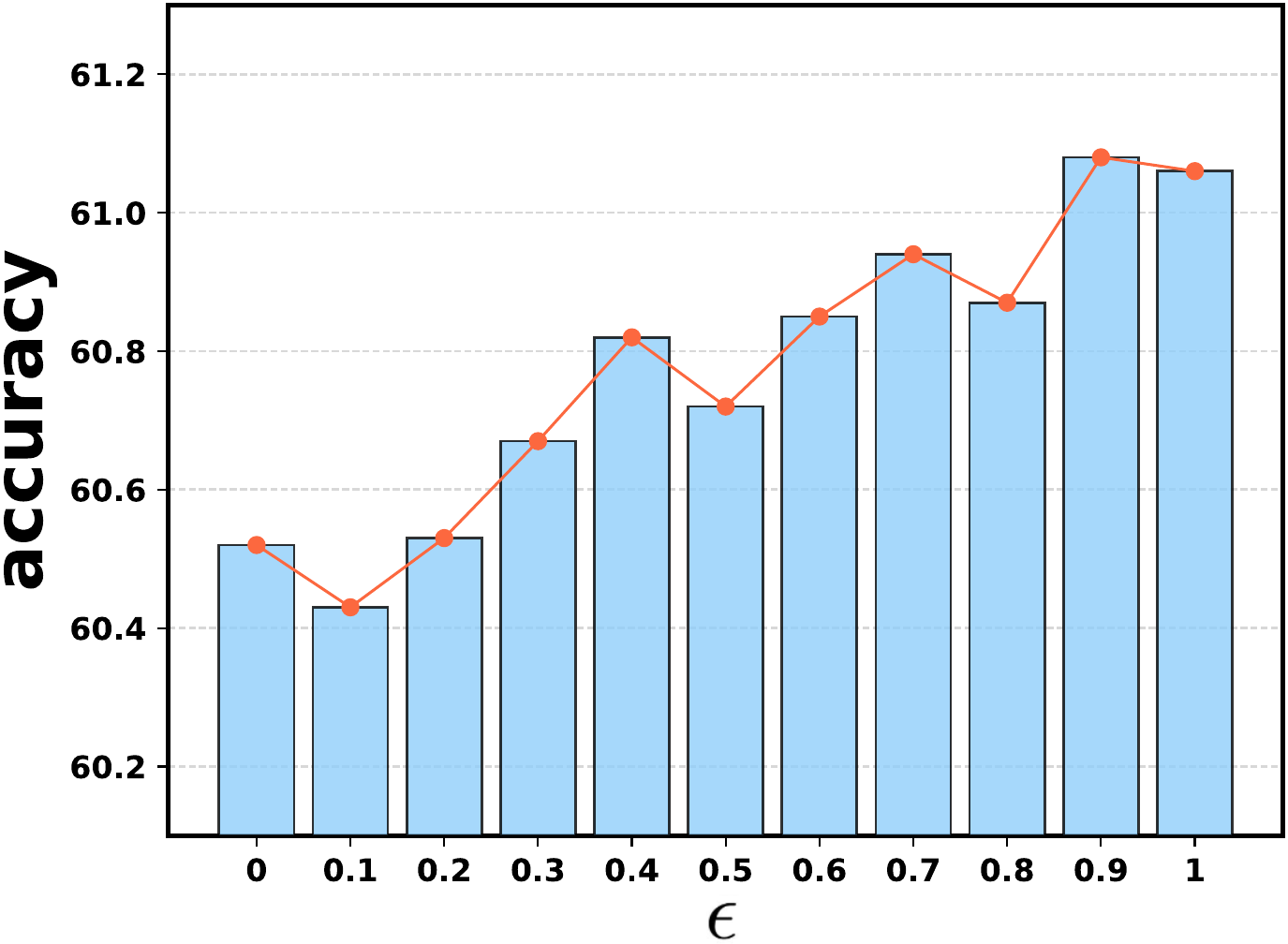}
\label{fig:Eqmn.perc_b}
}
\vspace{-0.1in}
\caption{The accuracy comparison of different $\epsilon$ values in Eq.~(\ref{eq:mn}) on ResNet18 in PTQ and QAT. 
As $\epsilon$ varies from $0$ to $0.9$, the final accuracy is mainly on the rise.
But a significant drop is caused by the outliers when $\epsilon=1$.}
\label{fig:Eqmn.perc}
\end{figure}

\begin{table}[tb]
    \caption{Ablation study for DSG scheme on ResNet18. We abbreviate quantization bits used for weights as "W-bit" (for activations as "A-bit"), top-1 test accuracy as "Top-1".}
    \label{ablation_exp}
	\centering
    \setlength{\tabcolsep}{1.mm}
    {\small
    \begin{tabular}{lcccc}
		\toprule
		{Method}  & No FT &{W-bit}  &{A-bit}  &{Top-1} \\
		\midrule
		Baseline  &-- &32 &32 &71.47  \\
		\midrule
		Vanilla (ZeroQ) &{\footnotesize\Checkmark} &4  &4  &26.04  \\
		Sample Correlation Inhibition &{\footnotesize\Checkmark} &4 &4 & 39.53 \\
		Layerwise Sample Enhancement &{\footnotesize\Checkmark} &4  &4  &27.12  \\
		Slack Distribution Alignment    &{\footnotesize\Checkmark} &4  &4  &33.39  \\
		{DSG} (Ours)    &{\footnotesize\Checkmark} &4  &4& \textbf{39.90}  \\
		\midrule
		Vanilla (GDFQ) &{\footnotesize\XSolidBrush} &4  &4  &60.52  \\
		Layerwise Sample Enhancement &{\footnotesize\XSolidBrush} &4  &4  &60.71  \\
		Sample Correlation Inhibition &{\footnotesize\XSolidBrush} &4 &4 & 60.92\\
		Slack Distribution Alignment    &{\footnotesize\XSolidBrush} &4  &4  &61.08  \\
		{DSG} (Ours)    &{\footnotesize\XSolidBrush} &4  &4& \textbf{62.18}  \\
		\bottomrule
	\end{tabular}
	}
\end{table}

\subsection{Ablation Study}
\label{sec:ablation}
We investigate the effect of the proposed LSE, SDA, and SCI techniques for our DSG scheme in data-free PTQ and QAT by ablation experiments. We use the ResNet18 architecture with the ImageNet dataset to evaluate our method under the W4A4 bit-width setting, which can show the effect of each part more obviously.

\subsubsection{Effect of SDA}
We first analyze the effectiveness of SDA. 
As discussed earlier, hyper-parameter $\epsilon$ determines the degree of relaxation in SDA. Therefore, in Fig.~\ref{fig:Eqmn.perc}, we verify the impact of different values of $\epsilon$ in the interval $(0,1]$ with a moderate step size 0.1, and further add $\epsilon=0$ to serve as the vanilla case that the fitting of BN statistics is constrained without relaxation.
In PTQ (Fig.~\ref{fig:Eqmn.perc_a}), when the $\epsilon$ varies from 0 to 0.9, the final performance increases gradually from 26.04\% to 33.39\%. 
In QAT (Fig.~\ref{fig:Eqmn.perc_b}), the accuracy also presents an overall upward trend (from 60.52\% to 61.08\%) as the $\epsilon$ varies from 0 to 0.9.
However, when $\epsilon=1$, the accuracy drops more or less in both scenarios, which is 3.61\% in PTQ and 0.02\% in QAT. 
These phenomena forcefully prove that by way of relaxing the constraints, the SDA method brings generated data a certain offset when fitting the BN statistic distribution, which contributes to feature diversity and consequently improves the performance. And when $\epsilon$ is set to 1, the degree of slack reaches the limit as all the outliers in $\boldsymbol{\tilde\mu}^{0}_{i}$ and $\boldsymbol{\tilde\sigma}^{0}_{i}$ are taken into account, which means the feature distribution of generated data might be out of the reasonable range. Therefore, it might result in better feature diversity in synthetic data but would harm the final accuracy of the quantized network. 
Therefore, we set the default value of $\epsilon$ as 0.9 empirically to balance divergence and the impact of outliers.

As the results shown in TABLE~\ref{ablation_exp}, the vanilla data-free PTQ method without SDA suffers a severe accuracy degradation by 7.35\%, and the SDA also provides 0.56\% accuracy promotion in data-free QAT (vanilla 60.52\% vs. SDA 61.08\%). The results reflect that SDA is essential. Compared to the other two parts of our scheme, SDA may provides a major contribution to the final performance.  

\subsubsection{Effect of LSE}

The TABLE~\ref{ablation_exp} also shows that LSE can improve the performance in both data-free QAT and PTQ.
As the ablation results show, in PTQ, LSE method gives a non-negligible increment compared with ZeroQ by 1.08\%. As for in QAT, it also helps the DSG scheme to have a slight improvement compared with the vanilla method by 0.19\%. 

\begin{table}[!h]
    \caption{Results of data-free PTQ methods with ResNet20 and VGG16bn on CIFAR10.}
    \label{tb:exp_cifar}
 	\centering
 	\setlength{\tabcolsep}{1.mm}
    {\small
    \begin{tabular}{llccccc}
		\toprule
		Arch &{Method} &{No D}  &{No FT}  &{W-bit}  &{A-bit}  &\tabincell{c}{{Top-1}}\\
		\midrule
		\multirow{12}{*}{ResNet20} &Baseline   &--  &-- &32 &32 &94.08  \\
		\cmidrule{2-7}
		& Real Data  &{\footnotesize\XSolidBrush}   &{\footnotesize\Checkmark} &4  &4  &87.38  \\
		\cmidrule{2-7}
		& ZeroQ  &{\footnotesize\Checkmark}   &{\footnotesize\Checkmark} &4  &4  &85.39\\
		& {DSG} (Ours)  &{\footnotesize\Checkmark}   &{\footnotesize\Checkmark}  &4  &4 &\textbf{87.79}  \\
		\cmidrule{2-7}
		& Real Data  &{\footnotesize\XSolidBrush}   &{\footnotesize\Checkmark} &6  &6  &93.80  \\
		\cmidrule{2-7}
		& ZeroQ  &{\footnotesize\Checkmark}   &{\footnotesize\Checkmark} &6  &6  &93.33  \\
		& {DSG} (Ours)  &{\footnotesize\Checkmark}   &{\footnotesize\Checkmark}  &6  &6 &\textbf{93.55}  \\
		\cmidrule{2-7}
		& Real Data  &{\footnotesize\XSolidBrush}   &{\footnotesize\Checkmark} &8  &8  &93.95  \\%
		\cmidrule{2-7}
		& ZeroQ  &{\footnotesize\Checkmark}   &{\footnotesize\Checkmark} &8  &8  &93.94  \\
		& {DSG} (Ours)  &{\footnotesize\Checkmark}   &{\footnotesize\Checkmark}  &8  &8 &\textbf{93.97}  \\
	    \midrule
    	\multirow{12}{*}{VGG16bn} &Baseline   &--  &-- &32 &32 &93.86  \\%
    	\cmidrule{2-7}
	    &Real Data  &{\footnotesize\XSolidBrush}   &{\footnotesize\Checkmark} &4  &4  &92.50  \\
		\cmidrule{2-7}
		&ZeroQ  &{\footnotesize\Checkmark}   &{\footnotesize\Checkmark} &4  &4  &91.79  \\
	    &{DSG} (Ours)  &{\footnotesize\Checkmark}   &{\footnotesize\Checkmark}  &4  &4 &\textbf{92.89}  \\
    	\cmidrule{2-7}
    	&Real Data  &{\footnotesize\XSolidBrush}   &{\footnotesize\Checkmark} &6  &6  &93.48  \\%
    	\cmidrule{2-7}
    	&ZeroQ  &{\footnotesize\Checkmark}   &{\footnotesize\Checkmark} &6  &6  &93.45  \\%
    	&{DSG} (Ours)  &{\footnotesize\Checkmark}   &{\footnotesize\Checkmark} &6  &6  & \textbf{93.68} \\%
    	\cmidrule{2-7}
    	&Real Data  &{\footnotesize\XSolidBrush}   &{\footnotesize\Checkmark} &8  &8  &93.59  \\%
    	\cmidrule{2-7}
    	&ZeroQ  &{\footnotesize\Checkmark}   &{\footnotesize\Checkmark} &8  &8  &93.53  \\%
    	&{DSG} (Ours)  &{\footnotesize\Checkmark}   &{\footnotesize\Checkmark} &8  &8  &  \textbf{93.61}\\%
		\bottomrule
	\end{tabular}
	}
\end{table}

\begin{table*}[!h]
    \centering
    \caption{Results of data-free PTQ methods with (a) ResNet18, ResNet50, (b) SqueezeNext, InceptionV3, and ShuffleNet on ImageNet. Here, "Arch" means the network architectures, "No D" means that none of the data is used to assist quantization, "No FT" stands for no finetuning (retraining). "Real Data" represents using real training data and quantization methods in ZeroQ (without any finetuning).}
    \subtable[ResNet18 and ResNet50]{
    \label{tb:exp_ptq_res}
    \setlength{\tabcolsep}{1.0mm}
    {\small
    \begin{tabular}{llccccc}
    \toprule
	Arch& {Method} &{No D}  &{No FT}  &{W-bit}  &{A-bit}  &\tabincell{c}{{Top-1}}\\
		\midrule
        \multirow{26}{*}{ResNet18}& Baseline   &-- &-- &32 &32 &71.47  \\
        \cmidrule{2-7}
         &Real Data &{\footnotesize\XSolidBrush}  &{\footnotesize\Checkmark} &4 &4 & 65.22 \\ 
        \cmidrule{2-7}
        &DFQ &{\footnotesize\Checkmark} &{\footnotesize\Checkmark} &4 &4 &0.10 \\
        &ACIQ &{\footnotesize\Checkmark} &{\footnotesize\Checkmark} &4 &4 &7.19 \\
        &MSE &{\footnotesize\Checkmark} &{\footnotesize\Checkmark} &4 &4 &15.08 \\
        &KL &{\footnotesize\Checkmark} &{\footnotesize\Checkmark} &4 &4 &16.27 \\
        &ZeroQ &{\footnotesize\Checkmark} &{\footnotesize\Checkmark} &4 &4 &26.04 \\ 
        &SQuant &{\footnotesize\Checkmark} &{\footnotesize\Checkmark} &4 &4 & 66.14 \\
        &DSG$^1$ (Ours) &{\footnotesize\Checkmark} &{\footnotesize\Checkmark} &4 &4 &\textbf{39.90} \\
        &DSG$^2$ (Ours) & {\footnotesize\Checkmark} &{\footnotesize\Checkmark} &4 &4 &\textbf{66.67} \\
        \cmidrule{2-7}
        &Real Data &{\footnotesize\XSolidBrush}  &{\footnotesize\Checkmark} &6 &6 & 71.18 \\ 
        \cmidrule{2-7}
        &ACIQ &{\footnotesize\Checkmark} &{\footnotesize\Checkmark} &6 &6 &61.15 \\
        &KL &{\footnotesize\Checkmark} &{\footnotesize\Checkmark} &6 &6 &61.34 \\
        &MSE &{\footnotesize\Checkmark} &{\footnotesize\Checkmark} &6 &6 &66.96 \\
        &DFQ &{\footnotesize\Checkmark}   &{\footnotesize\Checkmark}   &6 &6 &67.30 \\
        &ZeroQ &{\footnotesize\Checkmark} &{\footnotesize\Checkmark} &6 &6 &69.74 \\
        &DSG$^1$ (Ours) &{\footnotesize\Checkmark} &{\footnotesize\Checkmark} &6 &6 &\textbf{70.46} \\
        &DSG$^2$ (Ours) & {\footnotesize\Checkmark} &{\footnotesize\Checkmark} &6 &6 &\textbf{71.18} \\
        \cmidrule{2-7}
        &Real Data &{\footnotesize\XSolidBrush}  &{\footnotesize\Checkmark} &8 &8 & 71.48 \\ 
        \cmidrule{2-7}
        &ACIQ &{\footnotesize\Checkmark} &{\footnotesize\Checkmark} &8 &8 &68.78 \\
        &DFQ &{\footnotesize\Checkmark}   &{\footnotesize\Checkmark}  &8 &8 &69.70 \\
        &KL &{\footnotesize\Checkmark} &{\footnotesize\Checkmark} &8 &8 &70.69 \\
        &MSE &{\footnotesize\Checkmark} &{\footnotesize\Checkmark} &8 &8 &71.01 \\
        &ZeroQ &{\footnotesize\Checkmark} &{\footnotesize\Checkmark}  &8 &8 &71.43 \\
        &SQuant &{\footnotesize\Checkmark} &{\footnotesize\Checkmark}  &8 &8 &71.43 \\
        &DSG$^1$ (Ours) &{\footnotesize\Checkmark} &{\footnotesize\Checkmark} &8 &8 &\textbf{71.49} \\
        &DSG$^2$ (Ours) & {\footnotesize\Checkmark} &{\footnotesize\Checkmark} &8 &8 &\textbf{71.46} \\
        \midrule
        \multirow{17}{*}{ResNet50} &Baseline   &-- &-- &32 &32 &77.72  \\ 
        \cmidrule{2-7}
        &Real Data &{\footnotesize\XSolidBrush}   &{\footnotesize\Checkmark} &4  &4  & 68.13 \\
        \cmidrule{2-7}
        &ZeroQ &{\footnotesize\Checkmark}   &{\footnotesize\Checkmark} &4  &4  &8.20\\%
        &DFQ &{\footnotesize\Checkmark} &{\footnotesize\Checkmark} &4 &4 &10.32 \\ 
        &SQuant &{\footnotesize\Checkmark} &{\footnotesize\Checkmark} &4 &4 & 70.80\\ 
        &DSG$^1$ (Ours) &{\footnotesize\Checkmark}   &{\footnotesize\Checkmark} &4  &4  &\textbf{56.12}\\%
        &DSG$^2$ (Ours) &{\footnotesize\Checkmark}   &{\footnotesize\Checkmark} &4  &4  &\textbf{68.30}\\%
        \cmidrule{2-7}
        &Real Data &{\footnotesize\XSolidBrush}  &{\footnotesize\Checkmark} &6 &6 & 76.84 \\ 
        \cmidrule{2-7}
        &OCS &{\footnotesize\XSolidBrush}  &{\footnotesize\Checkmark} &6 &6 &74.80 \\ 
        &ZeroQ &{\footnotesize\Checkmark} &{\footnotesize\Checkmark} &6 &6 &75.56 \\ 
        &SQuant &{\footnotesize\Checkmark} &{\footnotesize\Checkmark} &6 &6 & 77.05 \\ 
        &DSG$^1$ (Ours) &{\footnotesize\Checkmark} &{\footnotesize\Checkmark} &6 &6 &\textbf{76.90} \\
        &DSG$^2$ (Ours) &{\footnotesize\Checkmark} &{\footnotesize\Checkmark} &6 &6 &\textbf{77.22} \\
        \cmidrule{2-7}
        &Real Data &{\footnotesize\XSolidBrush}  &{\footnotesize\Checkmark} &8 &8 & 77.70 \\ 
        \cmidrule{2-7} 
        &ZeroQ &{\footnotesize\XSolidBrush}  &{\footnotesize\Checkmark} &8 &8 &77.67 \\ 
        &DSG$^1$ (Ours) &{\footnotesize\Checkmark} &{\footnotesize\Checkmark} &8 &8 &\textbf{77.72} \\
        &DSG$^2$ (Ours) &{\footnotesize\Checkmark} &{\footnotesize\Checkmark} &8 &8 &\textbf{{77.83}} \\
        \bottomrule
    \end{tabular}
    }
    }
    \subtable[SqueezeNext, InceptionV3, and ShuffleNet]{
        \renewcommand\arraystretch{1.225}
        \label{tb:exp_ptq_other}
        \centering
        \setlength{\tabcolsep}{0.5mm}
        {\small
        \begin{tabular}{llccccc}
    		\toprule
    		Arch& {Method} &{No D}  &{No FT}  &{W-bit}  &{A-bit}  &\tabincell{c}{{Top-1}}\\
    		\midrule
    		\multirow{13}{*}{SqueezeNext} &Baseline   &--  &-- &32 &32 &  69.38\\%
    		\cmidrule{2-7}
    		&Real Data  &{\footnotesize\XSolidBrush}   &{\footnotesize\Checkmark} &6  &6  & 66.51 \\ 
    		\cmidrule{2-7}
    		&ZeroQ  &{\footnotesize\Checkmark}   &{\footnotesize\Checkmark} &6  &6  &39.83  \\%
    		&SQuant  &{\footnotesize\Checkmark}   &{\footnotesize\Checkmark} &6  &6  & 67.34 \\%
    		&DSG$^1$ (Ours)  &{\footnotesize\Checkmark}   &{\footnotesize\Checkmark} &6  &6  &\textbf{66.23}  \\%
    		&DSG$^2$ (Ours)  &{\footnotesize\Checkmark}   &{\footnotesize\Checkmark} &6  &6  &\textbf{68.05}  \\%
    		\cmidrule{2-7}
    		&Real Data  &{\footnotesize\XSolidBrush}   &{\footnotesize\Checkmark} &8  &8  &69.23  \\
    		\cmidrule{2-7}
    		&ZeroQ  &{\footnotesize\Checkmark}   &{\footnotesize\Checkmark} &8  &8  &68.01  \\%
    		&SQuant  &{\footnotesize\Checkmark}   &{\footnotesize\Checkmark} &8  &8  & 69.22 \\%
    		&DSG$^1$ (Ours)  &{\footnotesize\Checkmark}   &{\footnotesize\Checkmark} &8  &8  &\textbf{69.27}  \\%
    		&DSG$^2$ (Ours)  &{\footnotesize\Checkmark}   &{\footnotesize\Checkmark} &8  &8  &\textbf{{69.37}}  \\%
    		\midrule
    		\multirow{18}{*}{InceptionV3} &Baseline   &--  &-- &32 &32 &  78.80\\%
    		\cmidrule{2-7}
    		&Real Data  &{\footnotesize\XSolidBrush}   &{\footnotesize\Checkmark} &4  &4  & 73.50 \\ 
    		\cmidrule{2-7}
    		&ZeroQ  &{\footnotesize\Checkmark}   &{\footnotesize\Checkmark} &4  &4  & 12.00 \\%
    		&SQuant  &{\footnotesize\Checkmark}   &{\footnotesize\Checkmark} &4  &4  & 73.26 \\%
    		&DSG$^1$ (Ours)  &{\footnotesize\Checkmark}   &{\footnotesize\Checkmark} &4  &4  & \textbf{57.17} \\%
    		&DSG$^2$ (Ours)  &{\footnotesize\Checkmark}   &{\footnotesize\Checkmark} &4  &4  &\textbf{74.02}  \\%
    		\cmidrule{2-7}
    		&Real Data  &{\footnotesize\XSolidBrush}   &{\footnotesize\Checkmark} &6  &6  & 78.59 \\ 
    		\cmidrule{2-7}
    		&ZeroQ  &{\footnotesize\Checkmark}   &{\footnotesize\Checkmark} &6  &6  & 75.14 \\%
    		&SQuant  &{\footnotesize\Checkmark}   &{\footnotesize\Checkmark} &6  &6  & 78.30 \\%
    		&DSG$^1$ (Ours)  &{\footnotesize\Checkmark}   &{\footnotesize\Checkmark} &6  &6  & \textbf{78.12} \\%
    		&DSG$^2$ (Ours)  &{\footnotesize\Checkmark}   &{\footnotesize\Checkmark} &6  &6  &\textbf{78.59}  \\%
    		\cmidrule{2-7}
    		&Real Data  &{\footnotesize\XSolidBrush}   &{\footnotesize\Checkmark} &8  &8  & 78.79 \\ 
    		\cmidrule{2-7}
    		&ZeroQ  &{\footnotesize\Checkmark}   &{\footnotesize\Checkmark} &8  &8  & 78.70 \\%
    		&SQuant  &{\footnotesize\Checkmark}   &{\footnotesize\Checkmark} &8  &8  & 78.79 \\%
    		&DSG$^1$ (Ours)  &{\footnotesize\Checkmark}   &{\footnotesize\Checkmark} &8  &8  & \textbf{78.81}  \\%
    		&DSG$^2$ (Ours)  &{\footnotesize\Checkmark}   &{\footnotesize\Checkmark} &8  &8  &\textbf{78.85}  \\%
    		\midrule
    		\multirow{13}{*}{ShuffleNet} &Baseline   &--  &-- &32 &32 & 65.07 \\%
    		\cmidrule{2-7}
    		&Real Data  &{\footnotesize\XSolidBrush}   &{\footnotesize\Checkmark} &6  &6  & 56.25 \\ 
    		\cmidrule{2-7}
    		&ZeroQ  &{\footnotesize\Checkmark}   &{\footnotesize\Checkmark} &6  &6  & 39.92\\%
    		&SQuant  &{\footnotesize\Checkmark}   &{\footnotesize\Checkmark} &6  &6  & 60.25 \\%
    		&DSG$^1$ (Ours)  &{\footnotesize\Checkmark}   &{\footnotesize\Checkmark} &6  &6  &  \textbf{60.71}\\%
    		&DSG$^2$ (Ours)  &{\footnotesize\Checkmark}   &{\footnotesize\Checkmark} &6  &6  &\textbf{61.94}  \\%
    		\cmidrule{2-7}
    		&Real Data  &{\footnotesize\XSolidBrush}   &{\footnotesize\Checkmark} &8  &8  & 64.52 \\
    		\cmidrule{2-7}
    		&ZeroQ  &{\footnotesize\Checkmark}   &{\footnotesize\Checkmark} &8  &8  & 64.46 \\%
    		&SQuant  &{\footnotesize\Checkmark}   &{\footnotesize\Checkmark} &8  &8  & 64.68 \\%
    		&DSG$^1$ (Ours) &{\footnotesize\Checkmark}   &{\footnotesize\Checkmark} &8  &8  & \textbf{64.87} \\%
    		&DSG$^2$ (Ours)  &{\footnotesize\Checkmark}   &{\footnotesize\Checkmark} &8  &8  &\textbf{64.97}  \\%
    		\bottomrule
    	\end{tabular}
    	}
    }
    \label{tb:exp_ptq_imagenet}
\end{table*}

\subsubsection{Effect of SCI}
Then we evaluate SCI in data-free quantization. 
The motivation of our SCI is to alleviate the sample-level homogenization from the spatial perspective.
TABLE~\ref{ablation_exp} shows the effects of using SCI. In QAT, compared with the vanilla method, integrating with SCI helps to acquire better accuracy, which is 0.40\% higher. Also, in PTQ, the SCI method gains 13.49\% accuracy compared with using pure random inputs. 

Intuitively, these three techniques are motivated by different observations, and also the processes are carefully engineered that they hardly interfere with each other. 
In short, SDA and LSE improve the loss related to BN statistics, aiming to prevent the generated samples from overfitting to BN statistics and makes the samples focus on the statistics of different layers. While the SCI for data-free quantization constructs an unsupervised loss to constrain the features in the generator to holding distances among them, so that mitigate the sample homogenization. 
The results show that, in data-free QAT, DSG scheme equipped with these techniques obtains 1.66\% improvement in total with ResNet18 under W4A4 setting, which is up to 62.18\%. As for data-free PTQ, DSG can also boost the performance to 13.86\% with all the techniques. 

\subsection{Comparison with SOTA Methods}
\label{sec:comparison}
We extensively evaluate our DSG scheme on a wide range of architectures on CIFAR10 and ImageNet datasets for the image classification tasks. 
We denote the bit-width setting of the quantized network as W$w$A$a$ where $w$ is the bit-width for weight and $a$ is that for activation, like W8A8, W6A6, W4A4, \textit{etc}.

\subsubsection{Comparison with Data-free PTQ Methods}

\begin{table*}[tb]
    \centering
    \caption{Results of data-free QAT methods with ResNet18, ResNet50, InceptionV3, ShuffleNet, and MobileNetV2 on ImageNet.}
    \subtable[ResNet18 and ResNet50]{
    \renewcommand\arraystretch{1.498}
    \label{tb:exp_qat_res}
    \setlength{\tabcolsep}{0.5mm}
    {\small
    \begin{tabular}{llccccc}
    \toprule
		Arch &{Method} &{No D}  &{No FT}  &{W-bit}  &{A-bit}  &Top-1\\
		\midrule
        \multirow{15}{*}{ResNet18} &Baseline   &-- &-- &32 &32 &71.74  \\ 
        \cmidrule{2-7}
    	&Real Data  &{\footnotesize\XSolidBrush}   &{\footnotesize\XSolidBrush} &4  &4  & 63.87 \\%
        \cmidrule{2-7}
        &GDFQ &{\footnotesize\Checkmark} &{\footnotesize\XSolidBrush} &4 &4 &60.52 \\
        &DSG (Ours) &{\footnotesize\Checkmark} &{\footnotesize\XSolidBrush} &4 &4 & \textbf{62.18}\\
        \cmidrule{2-7}
    	&Real Data  &{\footnotesize\XSolidBrush}   &{\footnotesize\XSolidBrush} &6  &6  & 71.42 \\%
        \cmidrule{2-7} 
        &Integer-Only &{\footnotesize\XSolidBrush}  &{\footnotesize\XSolidBrush} &6 &6 &67.30 \\ 
        &GDFQ &{\footnotesize\Checkmark} &{\footnotesize\XSolidBrush} &6 &6 &70.43 \\
        &DSG (Ours) &{\footnotesize\Checkmark} &{\footnotesize\XSolidBrush} &6 &6 &\textbf{71.12} \\
        \cmidrule{2-7}
    	&Real Data  &{\footnotesize\XSolidBrush}   &{\footnotesize\XSolidBrush} &8  &8  & 71.44 \\%
        \cmidrule{2-7}
        &DFC &{\footnotesize\Checkmark}   &{\footnotesize\XSolidBrush} &8 &8 &69.57 \\ 
        &RVQuant &{\footnotesize\XSolidBrush} &{\footnotesize\XSolidBrush} &8 &8 &70.01 \\
        &GDFQ &{\footnotesize\Checkmark} &{\footnotesize\XSolidBrush} &8 &8 &71.43 \\
        &DSG (Ours) &{\footnotesize\Checkmark} &{\footnotesize\XSolidBrush} &8 &8 &\textbf{71.54} \\
        \midrule
        \multirow{15}{*}{ResNet50} &Baseline   &-- &-- &32 &32 &77.72  \\ 
        \cmidrule{2-7}
    	&Real Data  &{\footnotesize\XSolidBrush}   &{\footnotesize\XSolidBrush} &4  &4  & 70.27 \\%
        \cmidrule{2-7}
        &GDFQ &{\footnotesize\Checkmark} &{\footnotesize\XSolidBrush} &4 &4 &55.65 \\
        &RVQuant &{\footnotesize\XSolidBrush} &{\footnotesize\XSolidBrush} &4 &4 &64.90 \\
        &ZAQ &{\footnotesize\Checkmark} &{\footnotesize\XSolidBrush} &4 &4 &70.06 \\
        &DSG (Ours) &{\footnotesize\Checkmark} &{\footnotesize\XSolidBrush} &4 &4 &\textbf{71.96} \\ 
        \cmidrule{2-7}
    	&Real Data  &{\footnotesize\XSolidBrush}   &{\footnotesize\XSolidBrush} &6  &6  & 77.56 \\%
        \cmidrule{2-7}
        &GDFQ &{\footnotesize\Checkmark} &{\footnotesize\XSolidBrush} &6 &6 &76.59 \\
        &ZS-CGAN &{\footnotesize\Checkmark} &{\footnotesize\XSolidBrush} &6 &6 &76.82 \\
        &DSG (Ours) &{\footnotesize\Checkmark} &{\footnotesize\XSolidBrush} &6 &6 &\textbf{77.25} \\ 
        \cmidrule{2-7}
    	&Real Data  &{\footnotesize\XSolidBrush}   &{\footnotesize\XSolidBrush} &8  &8  & 77.66 \\%
        \cmidrule{2-7}
        &GDFQ &{\footnotesize\Checkmark} &{\footnotesize\XSolidBrush} &8 &8 &77.51 \\
        &DSG (Ours) &{\footnotesize\Checkmark} &{\footnotesize\XSolidBrush} &8 &8 &\textbf{77.64} \\ 
         \bottomrule
    \end{tabular}
    }
    }
    \subtable[ShuffleNet, MobileNetV2, and InceptionV3]{
        \renewcommand\arraystretch{1}
        \label{tb:exp_qat_other}
        \centering
        \setlength{\tabcolsep}{0.5mm}
        {\small
        \begin{tabular}{llccccc}
    		\toprule
    		Arch &{Method} &{No D}  &{No FT}  &{W-bit}  &{A-bit}  &\tabincell{c}{{Top-1}}\\
    		\midrule
    		\multirow{12}{*}{ShuffleNet}&Baseline   &--  &-- &32 &32 &65.07  \\%
    		\cmidrule{2-7}
    		&Real Data  &{\footnotesize\XSolidBrush}   &{\footnotesize\XSolidBrush} &4  &4  & 29.18 \\%
            \cmidrule{2-7}
    		&GDFQ  &{\footnotesize\Checkmark}   &{\footnotesize\XSolidBrush} &4  &4  &21.78 \\%
    		&{DSG} (Ours)  &{\footnotesize\Checkmark}   &{\footnotesize\XSolidBrush} &4  &4  &\textbf{29.71} \\%
            \cmidrule{2-7}
    		&Real Data  &{\footnotesize\XSolidBrush}   &{\footnotesize\XSolidBrush} &6  &6  & 62.89 \\%
    		\cmidrule{2-7}
    		&GDFQ  &{\footnotesize\Checkmark}   &{\footnotesize\XSolidBrush} &6  &6  &60.12 \\%
    		&{DSG} (Ours)  &{\footnotesize\Checkmark}   &{\footnotesize\XSolidBrush} &6  &6  &\textbf{61.37}  \\%
            \cmidrule{2-7}
    		&Real Data  &{\footnotesize\XSolidBrush}   &{\footnotesize\XSolidBrush} &8  &8  & 62.95 \\%
    		\cmidrule{2-7}
    		&GDFQ  &{\footnotesize\Checkmark}   &{\footnotesize\XSolidBrush} &8  &8  &64.03  \\%
    		&{DSG} (Ours)  &{\footnotesize\Checkmark}   &{\footnotesize\XSolidBrush} &8  &8  &\textbf{64.76}  \\%
    		\midrule
    		\multirow{15}{*}{MobileNetV2}&Baseline   &--  &-- &32 &32 &71.88  \\%
            \cmidrule{2-7}
    		&Real Data  &{\footnotesize\XSolidBrush}   &{\footnotesize\XSolidBrush} &4  &4  & 66.39 \\%
    		\cmidrule{2-7}
    		&GDFQ  &{\footnotesize\Checkmark}   &{\footnotesize\XSolidBrush} &4  &4  &51.30 \\%
    		&{DSG} (Ours)  &{\footnotesize\Checkmark}   &{\footnotesize\XSolidBrush} &4  &4  &\textbf{60.46}  \\%
            \cmidrule{2-7}
    		&Real Data  &{\footnotesize\XSolidBrush}   &{\footnotesize\XSolidBrush} &6  &6 & 72.11 \\%
    		\cmidrule{2-7}
    		&Integer-Only  &{\footnotesize\XSolidBrush}   &{\footnotesize\XSolidBrush} &6  &6  &70.90 \\%
    		&GDFQ  &{\footnotesize\Checkmark}   &{\footnotesize\XSolidBrush} &6  &6  &70.98 \\%
    		&GZNQ  &{\footnotesize\Checkmark}   &{\footnotesize\XSolidBrush} &6  &6  &71.12 \\%
    		&{DSG} (Ours)  &{\footnotesize\Checkmark}   &{\footnotesize\XSolidBrush} &6  &6  &\textbf{71.48}  \\%
            \cmidrule{2-7}
    		&Real Data  &{\footnotesize\XSolidBrush}   &{\footnotesize\XSolidBrush} &8  &8  & 72.92\\%
    		\cmidrule{2-7}
    		&GDFQ  &{\footnotesize\Checkmark}   &{\footnotesize\XSolidBrush} &8  &8  &  70.17\\%
    		&ZAQ  &{\footnotesize\Checkmark}   &{\footnotesize\XSolidBrush} &8  &8  &  71.43\\%
    		&{DSG} (Ours)  &{\footnotesize\Checkmark}   &{\footnotesize\XSolidBrush} &8  &8  &\textbf{72.90}  \\%
    		\midrule
    		\multirow{12}{*}{InceptionV3} &Baseline   &--  &-- &32 &32 &78.80  \\%
            \cmidrule{2-7}
    		&Real Data  &{\footnotesize\XSolidBrush}   &{\footnotesize\XSolidBrush} &4  &4  & 73.51 \\%
    		\cmidrule{2-7}
    		&GDFQ  &{\footnotesize\Checkmark}   &{\footnotesize\XSolidBrush} &4  &4  &70.39 \\%
    		&{DSG} (Ours)  &{\footnotesize\Checkmark}   &{\footnotesize\XSolidBrush} &4  &4  &\textbf{72.01}  \\%
            \cmidrule{2-7}
    		&Real Data  &{\footnotesize\XSolidBrush}   &{\footnotesize\XSolidBrush} &6  &6  & 78.81 \\%
    		\cmidrule{2-7}
    		&GDFQ  &{\footnotesize\Checkmark}   &{\footnotesize\XSolidBrush} &6  &6  &77.20 \\
    		&{DSG} (Ours)  &{\footnotesize\Checkmark}   &{\footnotesize\XSolidBrush} &6  &6  &\textbf{78.60}  \\%
            \cmidrule{2-7}
    		&Real Data  &{\footnotesize\XSolidBrush}   &{\footnotesize\XSolidBrush} &8  &8  & 79.00  \\%
    		\cmidrule{2-7}
    		&GDFQ  &{\footnotesize\Checkmark}   &{\footnotesize\XSolidBrush} &8  &8  &78.62  \\%
    		&{DSG} (Ours)  &{\footnotesize\Checkmark}   &{\footnotesize\XSolidBrush} &8  &8  &\textbf{78.94}  \\%
    		\bottomrule
    	\end{tabular}
    }
    }
    \label{tb:exp_qat_imagenet}
\end{table*}

To evaluate the advantage of the proposed scheme in PTQ, we first compare our DSG against other data-free PTQ methods (ZeroQ~\cite{cai2020zeroq}, DFQ~\cite{Nagel_2019_ICCV}, ACIQ~\cite{ACIQ}, MSE~\cite{chen2015mxnet}, KL~\cite{sung2015resiliency}, SQuant~\cite{guo2022squant}, and OCS~\cite{zhao2019improving}) on CIFAR10 and ImageNet datasets. Among these methods, ZeroQ and SQuant are typical generative data-free PTQ methods, which reconstruct synthetic data and calibrate the quantized network in different ways. Thus, we evaluate the generation performance of our DSG in PTQ with calibration methods from ZeroQ and SQuant, denoted as DSG$^1$ and DSG$^2$.
Other quantization methods use weight equalization or analytical clip range to improve the network performance. Besides, we additionally compare our method with OCS, which is also a PTQ method but requires real data for calibration. We evaluate these methods on various bit-width settings, and the results on CIFAR10 and ImageNet dataset are shown as TABLE~\ref{tb:exp_cifar} and TABLE~\ref{tb:exp_ptq_imagenet}, respectively.

Specifically, on CIFAR10~\cite{krizhevsky2009learning} dataset, we evaluate our DSG with ResNet20~\cite{7780459} and VGG16bn~\cite{VeryDeepConvolutional}. 
The results are shown in TABLE~\ref{tb:exp_cifar}. 
Under all settings on the CIFAR10 dataset, our DSG far exceeds existing SOTA methods. And we highlight that the quantized network calibrated with DSG data completely surpasses the network calibrated with real data on various settings. For example, under the W4A4 settings of ResNet20 and VGG6bn, our method exceeds the calibration with real data by 0.41\% and 0.39\%, respectively.

As listed in TABLE~\ref{tb:exp_ptq_imagenet}, the results over various network architectures, including ResNet18/50~\cite{he2016deep}, SqueezeNext~\cite{gholami2018squeezenext}, InceptionV3~\cite{szegedy2016rethinking}, and ShuffleNet~\cite{zhang2018shufflenet}, demonstrate that our proposed DSG significantly outperforms previous sample generation methods.
As the results shown, the accuracy of quantized networks calibrated with our DSG data under the W8A8 setting is almost not decreased and even surpasses the full-precision (W32A32) baseline networks on the ResNet18 and InceptionV3 architectures. 
The higher accuracy might be attributed to making full use of the potential of the quantized network. Quantizing the network to W8A8 maintains greater representation ability of the network compared with the lower bit-width quantization (such as W4A4) and brings less quantization error. Thus, under this setting, our diversified data can alleviate the performance degradation of the network quantization while even leading to a better solution compared with the full-precision network.
And the advantage of our DSG gets more evident when the bit-width becomes lower. For example, our DSG outperforms ZeroQ on SqueezeNext on the W6A6 setting by more than 26\%, and even outperforms real data by 1.54\%. 
And it is noteworthy that in W4A4 cases with ResNet18 architecture, our DSG surpasses ZeroQ by 13.86\%, and even surpasses real data by a notable 1.45\% with the compared SQuant calibration, which is up to 66.67\%. Moreover, with InceptionV3 under W4A4 settings, our DSG helps to acquire 74.02\% accuracy, which is higher than SQuant and real data by eminent 0.76\% and 0.52\%, respectively.

\subsubsection{Comparison with Data-free QAT Methods}

Furthermore, to demonstrate the applicability of our DSG scheme in data-free QAT, we compare it with existing QAT methods, such as DFC~\cite{haroush2020knowledge}, RVQuant~\cite{park2018valueaware}, Integer-Only~\cite{jacob2018quantization}, ZAQ~\cite{yuang2021zaq}, GZNQ~\cite{he2021generative}, ZS-CGAN~\cite{choi2021zero}, and GDFQ~\cite{xu2020generative}. These methods are engineered to finetune and update the network parameters. DFC is a finetuning method to recover accuracy for ultra-low bit-width cases, which uses Inceptionism~\cite{Mordvintsev2015InceptionismGD} to facilitate the generation of data with random labels. The other mentioned methods finetune the quantized network to improve the accuracy of the network. Among them, ZAQ and GDFQ introduce a generator to synthesize data and use the generated data to finetune the network. 
We evaluate these methods on various bit-width settings in the more challenging large-scale image classification task. The results on ImageNet are shown as TABLE~\ref{tb:exp_qat_imagenet}.

We test these methods on ResNet18/50~\cite{he2016deep}, InceptionV3~\cite{szegedy2016rethinking}, ShuffleNet~\cite{zhang2018shufflenet}, and MobileNetV2~\cite{sandler2018mobilenetv2}. The results in TABLE~\ref{tb:exp_qat_imagenet} show that our DSG enjoys the best performance. Concretely, as can be seen, regardless of network architecture, our DSG method surpasses many previous methods, including DFC, RVQuant, Integer-Only, and GDFQ in various bit-width settings. 
And it is noteworthy that, after finetuning with our DSG data, quantized ResNet18 and InceptionV3 even surpass their full-precision counterparts under the W8A8 setting. 
Comparing with other data-free QAT training methods under W4A4 setting, our DSG outperforms GDFQ by 1.66\% and 16.31\% with ResNet18 and ResNet50 respectively, also higher than RVQuant with real data by 7.06\% with ResNet50. 
Besides the mainstream neural architectures, we also evaluate our DSG over existing well-designed lightweight architectures.
The proposed DSG shows an overwhelming advantage over GDFQ with a similar training pipeline, 9.16\% higher with MobileNetV2 under W4A4, 7.93\% higher with ShuffleNet under W4a4, and 1.62\% higher with InceptionV3 under W4A4. 

In short, our DSG scheme outperforms other competitors over a wide range of experiments in data-free PTQ and QAT, including various bit-width, different network architectures, and two datasets. All the results forcefully demonstrate that the diversity of generated data is significant to calibrate the model for higher accuracy, especially in ultra-low bit-width conditions. If the synthetic data are homogeneous or even identical at the distribution and sample level, it would be ineffectual when used to quantize the network. Our DSG effectively diversifies the synthetic data and thus improves the performance of the quantized neural network.

\subsection{Further Study on Synthetic Data}
\label{sec:integration}
To further demonstrate the data diversifying caused by our DSG scheme has a general gain to the quantized network, we provide more experiments as corroborations to support our viewpoint.
In our paper, we have conducted a bunch of experiments on our synthetic data to evaluate its effectiveness in improving the network performance, including analyzing the data diversity and integrating the synthetic data generated by our method to different calibration methods and data-driven quantization methods.
The experiments show that benefits by the increase in the diversity of synthetic data, the network performance in various methods is significantly improved. It shows that good diversity is an important property of high-quality synthetic data.

\subsubsection{Analysis and Discussion for Data Diversity}
In this section, we further discuss the visualization results of our DSG scheme.  First, for data-free PTQ, we show the distribution of statistics of the real data, vanilla data (ZeroQ), and DSG data (ours) in Fig.~\ref{fig.homogenization}, which explains that SDA and LSE alleviate the homogenization from the aforementioned two levels.
Then, for data-free QAT, we visualized samples of vanilla data (GDFQ), DSG data (ours), real data, and Gaussian data in Fig.~\ref{fig:dsg-qat} to show the additional effect of SCI that diversifies samples by inhibiting feature correlation.
We also visualize the synthetic samples of our method and other generative data-free quantization methods (ZeroQ, GDFQ) in Fig.~\ref{fig:visualization} to visually show the effect of data diversifying.

\begin{figure}[t]
\centering
\includegraphics[width=0.99\linewidth]{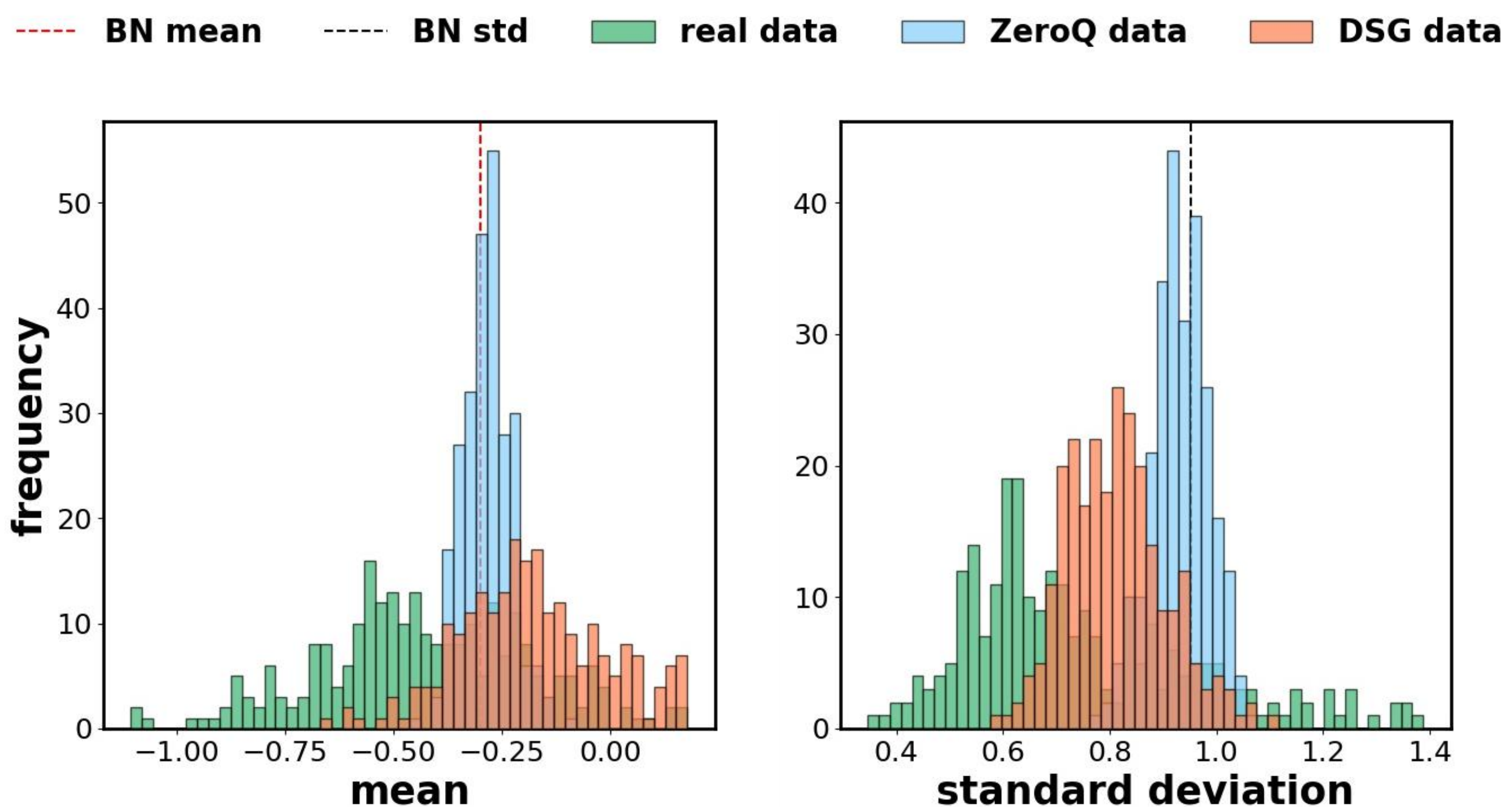}
\caption{Mean and standard deviation of the activations in one channel of ResNet18 when feeding different types of data (with 256 samples), including the synthetic data generated by ZeroQ and DSG and the real data. Each sample generated by ZeroQ behaves similarly overfitting BN statistics compared with real data, which shows the homogenization at both distribution and sample level. Our DSG data enjoys the diversity close to real data to obtain the accurate quantized network.}
\label{fig.homogenization}
\end{figure}

In Fig.~\ref{fig.homogenization}, it can be obviously investigated that DSQ samples behave more like real data than vanilla data on the offset of mean and standard deviation statistics, which corresponds to the diversity at the distribution level of our generated data scheme. Especially, the SDA plays an important role to make the distribution diverse by slacking the constraint of statistics during the generation process.
This phenomenon proves that our DSG scheme diversifies the synthetic data at the distribution level. 
Moreover, the DSG scheme also generates data with a larger variance compared to the vanilla scheme, which implies that our data samples are widely dispersed and more in line with the real situation. Especially, both SDA and LSE jointly promote diversity at the sample level, which might be useful in providing more content information.

\begin{figure}[t]
\centering
\includegraphics[width=0.99\linewidth]{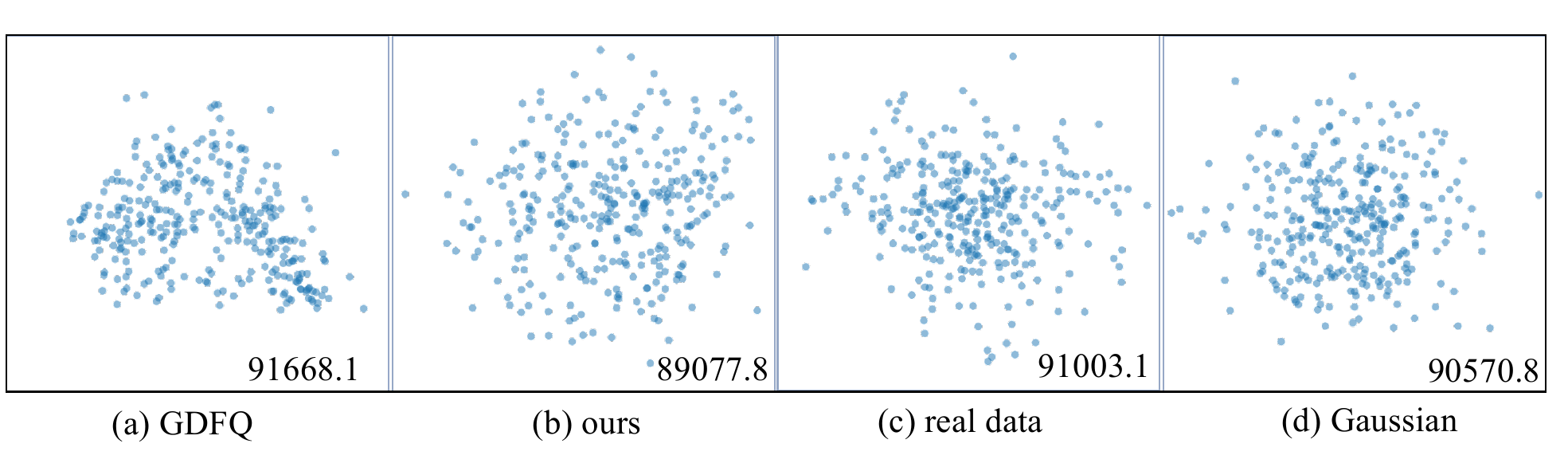}
\vspace{-0.1in}
\caption{Visualization of samples generated by (a) GDFQ method, (b) our SCI method, and (c) real data, (d) Gaussian random data. We collect 512 samples for each type of data randomly and reduce dimensions with PCA. Samples generated by our SCI method are more scattered than GDFQ samples and close to that of real data.}
\label{fig:dsg-qat}
\end{figure}

The motivation of the SCI method is to inhibit the correlation among features in the generator, and thus the synthetic samples are separated, as shown in Fig.~\ref{fig:dsg-qat}. 
We visualize the data generated by the vanilla method (GDFQ) and our SCI method, as well as the real data and Gaussian random data. At the right-bottom corner of each subplot, 
we measure the diversity of each set of features by summarizing the elements in the similarity kernel $\mathbf{K_f}$ as an index $s=\sum_{i,j\in \mathbf{f}_k} [\mathbf{K}_{i,j}]$, which indicates the feature distance between samples in spatial perspective. The larger the $s$, the more similar the samples.
As can be seen from Fig.~\ref{fig:dsg-qat}, the Gaussian random data is dispersive and stochastic. And it is also quite obvious that the real data is naturally distinct. However, samples generated by the vanilla GDFQ method, as shown in the subplot (a), are generally more concentrated in comparison, some of which are even overlapped. So it has the biggest $s$. 
Instead, our SCI utilizes the random Gaussian data as initialization and inhibits the correlation between samples to make the data more dispersed than Gaussian sampling. 
The samples generated by our method, which has the smallest $s$, are more dispersed than those generated by the vanilla method and seem close to real data. It demonstrates that the SCI method helps to avoid homogenization from the spatial perspective, and thus diversifies the synthetic samples from each other.

\begin{figure*}[t]
\centering
\includegraphics[width=0.8\linewidth]{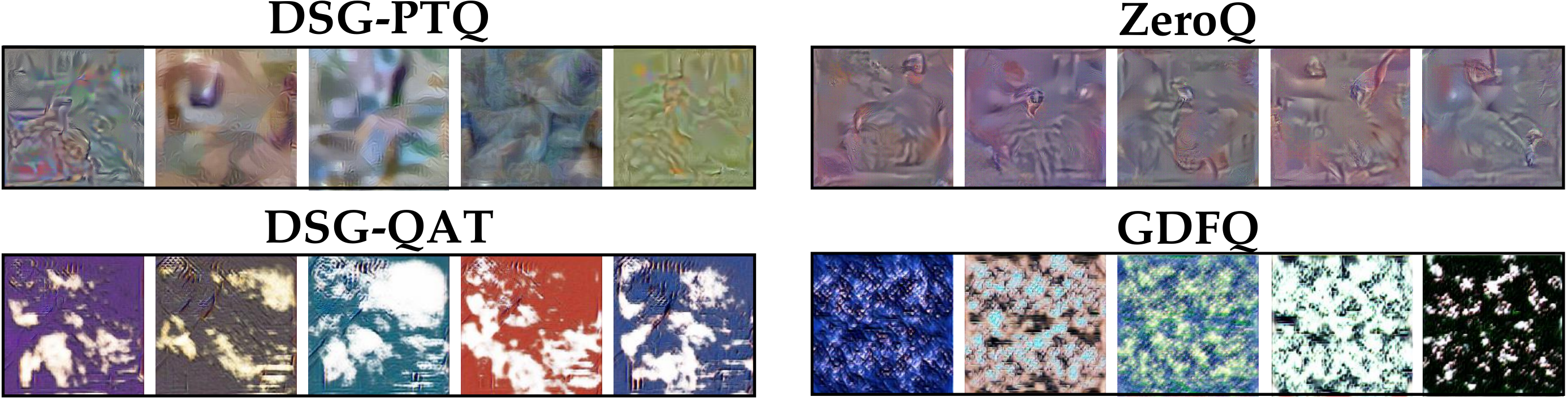}
\caption{Visualization of synthetic data for ZeroQ, GDFQ, and our DSG method in PTQ and QAT scenario. Each showcases 5 examples.}
\label{fig:visualization}
\end{figure*}

We visualize some synthetic samples of our method and other generative data-free quantization methods (ZeroQ, GDFQ). Take a closer look at Fig.~\ref{fig:visualization}, comparing pictures of DSG-PTQ and ZeroQ above, both of which generate images directly and update them iteratively. It is obvious that pictures generated by our DSG-PTQ method seem to have diverse colors with fine-grained textures and coarse-grained figures, while pictures of ZeroQ seem identical and have little difference in between. Meanwhile, below two sets of pictures are generated by DSG-QAT and GDFQ respectively. Owing to the generator, there are much more possibilities in the generation process and finally exhibit in both sets of images. It can be seen with naked eyes that samples of our DSG-QAT have distinct colors with higher saturation degrees, and white images inside the pictures have uncertain patterns. However, although samples generated by GDFQ have different colors and details inside the pictures as well, these samples have a uniform style in texture, which is tanglesome and poor in diversity. 

\subsubsection{Evaluation with Calibration Methods}

Additionally, to verify the versatility and robustness of the performance gains of our DSG method, we evaluate our DSG scheme with different calibration methods and compare it with other data generation methods under the same setting. The calibration methods include Percentile, EMA, and MSE. As shown in TABLE~\ref{tb:exp_quantization_method}, no matter which calibration methods is integrated, our DSG scheme substantially outstrips ZeroQ by 6.52\%, 9.36\%, and 0.61\%, respectively. The results strongly suggest that DSG can achieve the leading performance in various experimental settings, and thus the improvement is versatile and robust for various calibration methods. 

\begin{table}[t]
    \caption{Evaluation with calibration on ImageNet. We evaluation our DSG scheme with various calibration methods (Percentile, EMA, MSE) on ResNet18, and Vanilla means the calibration method adopted by ZeroQ, which simply obtain the quantizer by the maximum and minimum of the weight and activation.}
    \label{tb:exp_quantization_method}
	\centering
    \setlength{\tabcolsep}{2.mm}
    {\small
    \begin{tabular}{lccccc}
		\toprule
		{Method} &{No D}  &{W-bit}  &{A-bit} & {Quant}  &\tabincell{c}{{Top-1}}\\
		\midrule
		Baseline   &--  &-- &32 &32 &71.47  \\
		\midrule
		Real Data  &{\footnotesize\XSolidBrush}   &4  &4  & Vanilla &31.86  \\
		\midrule
		ZeroQ  &{\footnotesize\Checkmark}   &4  &4 & Vanilla  &26.04  \\
		{DSG} (Ours)  &{\footnotesize\Checkmark}   &4  &4 & Vanilla &\textbf{39.90}  \\
		\midrule
		Real Data  &{\footnotesize\XSolidBrush}   &4  &4  & Percentile &42.83  \\
		\midrule
		ZeroQ  &{\footnotesize\Checkmark}  &4  &4 &Percentile &32.24  \\
		{DSG} (Ours)  &{\footnotesize\Checkmark}  &4  &4 &Percentile &\textbf{38.76}  \\
		\midrule
		Real Data  &{\footnotesize\XSolidBrush}  &4  &4  & EMA & 42.67  \\
		\midrule
		ZeroQ  &{\footnotesize\Checkmark}  &4  &4  &EMA &32.31  \\
		{DSG} (Ours)  &{\footnotesize\Checkmark}  &4  &4 &EMA &\textbf{41.67}  \\
		\midrule
		Real Data  &{\footnotesize\XSolidBrush}  &4  &4  &MSE &41.45  \\
		\midrule
		ZeroQ  &{\footnotesize\Checkmark}   &4  &4  &MSE &39.39  \\
		{DSG} (Ours)   &{\footnotesize\Checkmark}  &4  &4 &MSE  &\textbf{40.00}  \\
		\bottomrule
	\end{tabular}
	}
\end{table}

\begin{table}[t]
    \caption{Evaluation with DFQ on ImageNet. We use cross-layer equalization and bias correction proposed by DFQ to perform per-layer quantization on ResNet18.}
    \label{tb:exp_dfq}
	\centering
	\setlength{\tabcolsep}{2.5mm}
    {\small
    \begin{tabular}{lccccc}
		\toprule
		{Method} &{No D}  &{No FT}  &{W-bit}  &{A-bit}  &\tabincell{c}{{Top-1}}\\
		\midrule
		Baseline   &--  &-- &32 &32 &69.76  \\
		\midrule
		Real Data  &{\footnotesize\XSolidBrush}   &{\footnotesize\Checkmark} &6  &6  &59.16  \\
		\midrule
		ZeroQ  &{\footnotesize\Checkmark}   &{\footnotesize\Checkmark} &6  &6  &58.12 \\
		{DSG} (Ours)  &{\footnotesize\Checkmark}   &{\footnotesize\Checkmark}  &6  &6 &\textbf{58.69}  \\
		\midrule
		Real Data  &{\footnotesize\XSolidBrush}   &{\footnotesize\Checkmark} &8  &8  &69.22  \\%
		\midrule
		ZeroQ  &{\footnotesize\Checkmark}   &{\footnotesize\Checkmark} &8  &8  &65.75  \\
		{DSG} (Ours)  &{\footnotesize\Checkmark}   &{\footnotesize\Checkmark}  &8  &8 &\textbf{68.88}  \\
		\bottomrule
	\end{tabular}
	}
\end{table}

We also evaluate the synthetic data with DFQ~\cite{Nagel_2019_ICCV}, which is a calibration method for data-free quantization. 
Specifically, DFQ has proposed cross-layer range equalization to equalize the different channel ranges of weight in per-layer quantization and bias correction which is to eliminate the biased quantization error. Both of the two techniques rely on the statistics of BN layers following the convolution layer. 
Therefore, BN layers are needed to calibrate the corresponding activations, so they have to proceed behind each convolution layer, which results in DFQ only working on specific network architectures and cannot be commonly practiced. 
Fortunately, generative methods, such as ZeroQ and our DSG, can work on arbitrary architectures, and the statistics of activations can take the place of BN statistics and thus be used in the DFQ method. 
TABLE~\ref{tb:exp_dfq} shows the closeups of two generative data-free quantization methods, \textit{i.e.}, ZeroQ and DSG, in conjunction with the DFQ method. Results show that our DSG outperforms ZeroQ by 0.57\% and 3.13\% in W6A6 and W8A8 cases.

\begin{table}[t]
    \caption{AdaRound on ImageNet with ResNet18 and MobileNetV2. We evaluate the DSG scheme on AdaRound, one of the SOTA methods of data-driven post-training quantization, which learns how to quantize weights using several batches of unlabeled samples. We adopt "Label"~\cite{9156818} and image prior ("Prior")~\cite{yin2020dreaming} techniques to evaluating our DSG scheme further.}
    \label{tb:adaround_res18}
	\centering
    \setlength{\tabcolsep}{0.4mm}
    {\small
    \begin{tabular}{llcccccc}
		\toprule
		Arch &{Method} &{No D}  &{Label} &{Prior}  &{W-bit}  &{A-bit}  &\tabincell{c}{{Top-1}}\\
		\midrule
		\multirow{24}{*}{ResNet18} &Real Data  &{\footnotesize\XSolidBrush} &{\footnotesize\XSolidBrush} 
		&{\footnotesize\XSolidBrush} &3  &32  & 64.16\\
		\cmidrule{2-8}
		&ZeroQ  &{\footnotesize\Checkmark} &{\footnotesize\XSolidBrush}  &{\footnotesize\XSolidBrush} &3  &32  & 49.86 \\
		&DSG (Ours)  &{\footnotesize\Checkmark} &{\footnotesize\XSolidBrush}  &{\footnotesize\XSolidBrush} &3  &32  & \textbf{56.09} \\
		&DSG (Ours)  &{\footnotesize\Checkmark} &{\footnotesize\Checkmark}  &{\footnotesize\XSolidBrush} &3  &32  & \textbf{58.27} \\
		&DSG (Ours)  &{\footnotesize\Checkmark} &{\footnotesize\Checkmark}  &{\footnotesize\Checkmark} &3  &32  & \textbf{61.32} \\
		\cmidrule{2-8}
		&Real Data  &{\footnotesize\XSolidBrush} &{\footnotesize\XSolidBrush} 
		&{\footnotesize\XSolidBrush} &4  &32  & 68.42 \\
		\cmidrule{2-8}
		&ZeroQ  &{\footnotesize\Checkmark} &{\footnotesize\XSolidBrush}  &{\footnotesize\XSolidBrush} &4  &32  & {63.86} \\
		&DSG (Ours)  &{\footnotesize\Checkmark} &{\footnotesize\XSolidBrush}  &{\footnotesize\XSolidBrush} &4  &32  & \textbf{66.87} \\
		&DSG (Ours)  &{\footnotesize\Checkmark} &{\footnotesize\Checkmark}  &{\footnotesize\XSolidBrush} &4  &32  & \textbf{67.09} \\
		&DSG (Ours)  &{\footnotesize\Checkmark} &{\footnotesize\Checkmark}  &{\footnotesize\Checkmark} &4  &32  & \textbf{67.78} \\
		\cmidrule{2-8}
		&Real Data  &{\footnotesize\XSolidBrush} &{\footnotesize\XSolidBrush} 
		&{\footnotesize\XSolidBrush} &5  &32  & 69.21\\
		\cmidrule{2-8}
		&ZeroQ  &{\footnotesize\Checkmark} &{\footnotesize\XSolidBrush}  &{\footnotesize\XSolidBrush} &5  &32  & 68.39 \\
		&DSG (Ours)  &{\footnotesize\Checkmark} &{\footnotesize\XSolidBrush}  &{\footnotesize\XSolidBrush} &5  &32  & \textbf{68.97} \\
		&DSG (Ours)  &{\footnotesize\Checkmark} &{\footnotesize\Checkmark}  &{\footnotesize\XSolidBrush} &5  &32  & \textbf{69.02} \\
		&DSG (Ours)  &{\footnotesize\Checkmark} &{\footnotesize\Checkmark}  &{\footnotesize\Checkmark} &5  &32  & \textbf{69.16} \\
		\cmidrule{2-8}
		&Real Data  &{\footnotesize\XSolidBrush} &{\footnotesize\XSolidBrush} 
		&{\footnotesize\XSolidBrush} &4  &8  &68.24 \\
		\cmidrule{2-8}
		&ZeroQ  &{\footnotesize\Checkmark} &{\footnotesize\XSolidBrush}  &{\footnotesize\XSolidBrush} &4  &8  &56.34  \\
		&DSG (Ours)  &{\footnotesize\Checkmark} &{\footnotesize\Checkmark}  &{\footnotesize\Checkmark} &4  &8  & \textbf{62.40} \\
		\midrule
		\multirow{8}{*}{MobileNetV2} &Real Data  &{\footnotesize\XSolidBrush} &{\footnotesize\XSolidBrush} 
		&{\footnotesize\XSolidBrush} &3  &32  &58.13 \\
		\cmidrule{2-8}
		&ZeroQ  &{\footnotesize\Checkmark} &{\footnotesize\XSolidBrush}  &{\footnotesize\XSolidBrush} &3  &32  & 11.07 \\
		&DSG (Ours)  &{\footnotesize\Checkmark} &{\footnotesize\Checkmark}  &{\footnotesize\Checkmark} &3  &32  & \textbf{45.40} \\
		\cmidrule{2-8}
		&Real Data  &{\footnotesize\XSolidBrush} &{\footnotesize\XSolidBrush} 
		&{\footnotesize\XSolidBrush} &4  &32  & 68.37\\
		\cmidrule{2-8}
		&ZeroQ  &{\footnotesize\Checkmark} &{\footnotesize\XSolidBrush}  &{\footnotesize\XSolidBrush} &4  &32  &56.16  \\
		&DSG (Ours)  &{\footnotesize\Checkmark} &{\footnotesize\Checkmark}  &{\footnotesize\Checkmark} &4  &32  & \textbf{58.13} \\
		\bottomrule
	\end{tabular}
	}
\end{table}

\subsubsection{Evaluation with Data-driven Quantization Methods}

Experiments above are conducted on calibration methods that optimize the clipping value for activations. We further evaluate our DSG data with a novel data-driven PTQ method named Adaround~\cite{nagel2020down}, which utilizes a rounding approach to quantize weights. Meanwhile, we introduce two other data generation tricks into our DSG scheme, \textit{e.g.}, generating data with labels~\cite{9156818} provides class information from parameters, image prior~\cite{yin2020dreaming} avoids generating unpractical scenes or unrecognizable patterns. We have conducted different bit-width for both weights and activations on different network architectures including ResNet18 and MobileNetV2 (TABLE~\ref{tb:adaround_res18}). And we generate 1024 samples in every single experiment. TABLE~\ref{tb:adaround_res18} shows that our DSG scheme outperforms ZeroQ by a wide margin when solely quantizing weights and preserving full-precision for activations. It is notable that in ultra-low bit-width settings (\textit{i.e.}, 3-bit), our DSG surpasses ZeroQ by 11.46\% on ResNet18 and a surprising 34.33\% on MobileNetV2. We also tried quantizing activations to 8-bit, and the results prove that our DSG is more robust to quantization of parameters, which outstrips ZeroQ by 6.06\% with ResNet18 on ImageNet.
Besides, we have the observation that the model gains further improvements when these tricks are applied together, even close the performance applying real data. Because our DSG is orthogonal with labels and image prior methods, so these methods can jointly boost the accuracy performance without any inconsistency. 

\section{Conclusion}

In this paper, we first revisit the sample generation process in generative data-free PTQ and QAT quantization and then give a theoretical analysis that the diversity of synthetic samples is crucial for the data-free quantization and reveal the homogenization of synthetic data in the distribution and sample levels.
In this paper, we propose a novel Diverse Sample Generation (DSG) scheme for generative data-free quantization, to address the deficiencies of previous methods which severely debases the quality of the synthetic data and further harms the performance of the quantized network. Our scheme has been evaluated on a variety of bit widths and neural architectures, and the results forcefully demonstrate the effectiveness and versatility of the DSG scheme. It shows notable accuracy improvements in ultra-low bit-width cases (e.g. W4A4). 
Moreover, benefiting from the enhanced diversity, the performance of the network is significantly improved in various methods integrating with synthetic data, which demonstrates that diversity is an important property of high-quality synthetic data.
We hope our work can provide directions for future research on data-free quantization.

\ifCLASSOPTIONcompsoc
  \section*{Acknowledgments}
\else
  \section*{Acknowledgment}
\fi

This work was supported by National Natural Science Foundation of China (62022009, 61872021), Beijing Nova Program of Science and Technology (Z191100001119050), State Key Lab of Software Development Environment (SKLSDE-2020ZX-06).

{
\bibliography{egbib}
\bibliographystyle{ieee_fullname}
}

\clearpage

\appendices

\section{Main Proofs and Discussion}

\subsection{Proof of Lemma 1}
\label{app:lem1}
\begin{lem}
For any input domains $\mathcal{X}$ that includes multiples classes (at least 2) of samples, it can be modeled as several independent high-density $\{\mathcal{R}_{H1}, \cdots, \mathcal{R}_{Hh}\}$ and low-density $\{\mathcal{R}_{L1}, \cdots, \mathcal{R}_{Ll}\}$ sub-regions divided by possible decision surfaces, where $h\geq 1$ and $l\geq 0$.
\end{lem}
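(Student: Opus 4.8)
The plan is to give a constructive modeling argument that invokes the two classical assumptions stated above (\emph{low-density} and \emph{smoothness}), rather than to prove an intrinsic structural fact. First I would fix the input domain $\mathcal{X}$ together with its underlying data density $p(x)$ and the labeling $x \mapsto y(x)$ induced by the well-trained model $\textrm{M}$, which by hypothesis distinguishes at least two classes. By the \emph{smoothness assumption}, points that are close in input space share a label, so each class $c$ occupies a region $\mathcal{C}_c = \{x \in \mathcal{X} : y(x) = c\}$ that is a union of connected components on each of which the label is locally constant; taking these components --- and, if one insists on a bounded count, coarsening them to the resolution at which intra-region variation is neglected, exactly as Theorem~\ref{th:dfq} already does --- yields a finite collection of candidate high-density pieces $\{\mathcal{R}_{H1},\dots,\mathcal{R}_{Hh}\}$ with $h\geq 1$ since there is at least one class.

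Next I would use the \emph{low-density assumption} to locate the decision surfaces: the boundary between any two such pieces is, by assumption, more likely to lie in the low-density valleys of $p(x)$ than to pass through its high-density modes. Thickening each such boundary to the surrounding low-density shell produces a (possibly empty, hence $l\geq 0$) collection of low-density sub-regions $\{\mathcal{R}_{L1},\dots,\mathcal{R}_{Ll}\}$. I would then check that the $\mathcal{R}_{H\bullet}$ and $\mathcal{R}_{L\bullet}$ can be chosen pairwise disjoint and to exhaust $\mathcal{X}$ up to a set of negligible measure, so that $\mathcal{X} = \bigcup_i \mathcal{R}_{Hi} \cup \bigcup_j \mathcal{R}_{Lj}$ is a genuine partition into sub-regions; here ``independent'' should be read as: each sub-region is separated from the others by a decision surface, so that membership in one carries no label information about another --- precisely the property Theorem~\ref{th:dfq} later exploits in its counting/expectation argument.

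The main obstacle I expect is pinning down ``independent'' precisely enough to be usable without over-claiming: the two assumptions are heuristic (they hold ``more likely than not''), so the cleanest route is to phrase the lemma as a \emph{modeling choice justified by} these assumptions rather than as a theorem with a fully rigorous QED, and to be careful that (i) the number of pieces stays finite, which needs either a compactness / finite-resolution argument or the explicit coarsening already used in Theorem~\ref{th:dfq}; (ii) the low-density regions may be empty when classes are well separated with no density valley between them, which is exactly why $l$ is allowed to be $0$; and (iii) the decision surfaces themselves have measure zero and can be absorbed into either neighbouring region without affecting the partition. Once this modeling is in place, Lemma~1 is established, and it feeds directly into the proof of Theorem~\ref{th:dfq}.
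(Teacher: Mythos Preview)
Your proposal is sound but takes a genuinely different route from the paper. You argue \emph{constructively}: you build the high-density pieces from the connected components of the class regions via the smoothness assumption, then carve out the low-density shells around the decision surfaces via the low-density assumption, and finally verify that the two families form a partition of $\mathcal{X}$. The paper instead argues by \emph{contradiction}: it assumes the modeling fails (no low-density sub-regions, or at most one high-density sub-region) and shows each case violates the premise that $\mathcal{X}$ contains at least two classes, again invoking the two assumptions.

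What each buys: your direct construction actually produces the partition and makes explicit the ``independence'' property that Theorem~\ref{th:dfq} later needs, and you are more careful about the heuristic status of the assumptions (framing the lemma as a modeling choice, handling finiteness via coarsening, and explaining why $l=0$ is allowed). The paper's contradiction argument is shorter but non-constructive, and in fact its negation step is slightly mismatched with the stated bounds (it implicitly argues for $h\geq 2$ and $l\geq 1$, stronger than the lemma's $h\geq 1$, $l\geq 0$). Either approach suffices for the downstream use in Theorem~\ref{th:dfq}; yours is the more explicit of the two.
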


\begin{proof}
As discussed in Section~\ref{sec:why}, the given input domain $\mathcal{X}$ follows \textit{Low-density assumption} and \textit{Smoothness assumption}.
And we use the method of proof by contradiction. 
If Lemma~\ref{lem:app1} is false, then at least one of the following is true:
\begin{itemize}
\item Low-density sub-regions in the input domain do not exist;
\item High-density sub-regions in the input domain do not exist or are only one.
\end{itemize}
And below we prove that none of the above is true.

(1) Let us assume that the first term holds.

When there is no low-density area, according to the low-density assumption, the possible decision surface can only pass through the low-density area, so there is no possible decision surface in the input domain, that is, the input domain contains and contains only one meaningful category. This violates the premise that the input domain contains at least two classes.

(2) Then we assume that the second term holds.

When high-density regions do not exist in the input domain, the entire input domain is a low-density region. According to the smoothness assumption, since there is no high-density path between any two samples in the input domain, they cannot be classified into the same category by the decision surface.
Therefore, any two samples in the input domain belong to different classes, that is, there is no determinable class division. This in principle violates the premise that the input domain contains multiple classes, that is, the input domain should be classifiable rather than distinct everywhere.

When there is only one high-density region in the input domain, also according to the smoothness assumption, only two samples in this high-density region can form a high-density path, i.e. belong to the same class. In other words, the input domain contains one and only one meaningful class, which also violates the premise that the input domain contains at least two classes.

So, to sum up, Lemma~\ref{lem:app1} must be true.

\end{proof}

\subsection{Proof of Theorem 1}
\label{app:th1}
\begin{thm}
Given a set of all possible input domains $\mathbf{X}=\{\mathcal{X}_0, \mathcal{X}_1, \cdots \}$, whose $i$-th element can be denoted as $\mathcal{X}_i$ with scale $V^i$ and consists of several sub-regions $\{\mathcal{R}^i_1, \cdots, \mathcal{R}^i_{K^i}\}$ with scales $\{V^i_1, \cdots, V^i_{K^i}\}$, and the number $K^i\geq 2$ is unknown yet limited. Consider a sample set $\mathbf{x}^s=\{x_0^s,\cdots , x_N^s\} \subset \mathcal{X}^*$, where $\mathcal{X}^*=\mathbb{E}(\mathbf{X})$ denotes the potential input domain and the differences inside each sub-region of $\mathcal{X}$ is neglected. When the set $\mathbf{x}^s$ satisfies that for $\forall x_i^s\in\mathbf{x}^s$, $p (x_i^s\in \mathcal{R}^*_j)=\frac{V^*_j}{V^*}$, the information reflecting from all possible input domains $\mathbf{X}$ by the sample set $\mathbf{x}^s$ will be the maximized in mathematical expectation, where $V^*=\sum_{k=0}^{K^*}V^*_k$. 
\end{thm}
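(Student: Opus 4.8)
\emph{Proof proposal.} The plan is to recast the statement as a cross-entropy (equivalently KL-divergence) minimization and then close it with Gibbs' inequality. First I would use the hypothesis that differences inside each sub-region are neglected to reduce every domain to a finite object: a domain $\mathcal{X}_i$ is then fully described by the probability vector $p^i$ on its sub-region slots, $p^i_j=V^i_j/V^i$ with $V^i=\sum_k V^i_k$, and likewise $\mathcal{X}^*=\mathbb{E}(\mathbf{X})$ by $p^*_j=V^*_j/V^*$. Reading $\mathbb{E}(\mathbf{X})$ componentwise gives $V^*_j=\mathbb{E}_i[V^i_j]$ and $V^*=\mathbb{E}_i[V^i]$; assuming (as is implicit when all candidate domains live in a common ambient input space) that the total scale $V^i\equiv V$ is shared by all domains, this yields the identity $\mathbb{E}_i[p^i_j]=p^*_j$, which is the engine of the proof. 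A sample set $\mathbf{x}^s$ of $n=N+1$ points induces an empirical sub-region distribution $\hat{p}$, $\hat{p}_j=n_j/n$ with $n_j=\#\{k:x^s_k\in\mathcal{R}^*_j\}$.

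Second, I would make precise what ``information reflected from a domain by the sample set'' means. Since by Lemma~\ref{lem:app1} the sub-regions are independent and mutually unrepresentative, a faithful sample set should let one reconstruct the domain's data distribution; I therefore define the reflected information as the log-likelihood of $\mathcal{X}_i$-distributed data under the sample-induced distribution, $\mathcal{I}(\mathbf{x}^s;\mathcal{X}_i)=\sum_j p^i_j\log\hat{p}_j$, which equals $-D_{\textrm{KL}}(p^i\|\hat{p})$ up to the $\hat{p}$-independent constant $-H(p^i)$. Maximizing its expectation over $\mathbf{X}$ is thus the same as minimizing the expected divergence $\mathbb{E}_i[D_{\textrm{KL}}(p^i\|\hat{p})]$, and the infinite penalty incurred when $\hat{p}_j=0$ for a slot that some domain actually uses encodes precisely the requirement to cover every sub-region.

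Third, the main computation. By linearity of expectation and $\mathbb{E}_i[p^i_j]=p^*_j$,
\begin{equation*}
\mathbb{E}_{\mathbf{X}}\big[\mathcal{I}(\mathbf{x}^s;\mathcal{X}_i)\big]=\sum_j \mathbb{E}_i[p^i_j]\,\log\hat{p}_j=\sum_j p^*_j\log\hat{p}_j=-H(p^*,\hat{p}),
\end{equation*}
and Gibbs' inequality (or a one-line Lagrange-multiplier argument on $\sum_j p^*_j\log\hat{p}_j$ under $\sum_j\hat{p}_j=1$) shows this is maximized exactly when $\hat{p}_j=p^*_j=V^*_j/V^*$ for all $j$. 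To match the per-sample statement, note that if the $x^s_k$ are placed independently with $p(x^s_k\in\mathcal{R}^*_j)=q_j$, then $\mathbb{E}[\hat{p}_j]=q_j$ and, for the budget $n=N+1$ large, $\hat{p}_j\to q_j$; hence $q_j=V^*_j/V^*$ attains the maximal expected information, while any fixed $q\neq p^*$ is strictly worse even in the upper bound $-H(p^*,q)$. This is the assertion of the theorem.

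The delicate points are not in the optimization (Gibbs is immediate) but in the modeling. The crux is justifying that ``information reflected'' is the cross-entropy/KL functional rather than, say, a scale-weighted coverage count $\sum_{j:\,n_j\ge 1}p^*_j$ — which does \emph{not} place its optimum at the proportional allocation — so Theorem~\ref{th:dfq} is genuinely a statement tied to this choice of objective. The swap $\mathbb{E}_i[V^i_j/V^i]=V^*_j/V^*$ is the second weak point: it needs the common-total-scale assumption (or at least decorrelation of $V^i_j$ with $1/V^i$), which I would state explicitly. Finally, since $\log$ is concave, passing from ``$\hat{p}=p^*$ is optimal'' to ``$p(x^s_k\in\mathcal{R}^*_j)=V^*_j/V^*$ is optimal'' uses the large-$N$ regime, so I would either phrase the theorem in terms of the empirical distribution $\hat{p}$ directly or include the law-of-large-numbers step.
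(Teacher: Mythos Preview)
Your proposal is correct under your chosen interpretation of the objective, but it takes a genuinely different route from the paper. The paper does \emph{not} work with cross-entropy: it interprets ``information reflected'' as the Shannon entropy of the sampling distribution itself, $\mathcal{H}(\mathbf{x}^s)=-\sum_j p_j\log p_j$, and then uses a Lagrange-multiplier step to conclude $p_j=1/K^*$, i.e.\ the \emph{uniform} law over sub-region slots. To connect this uniform optimum to the statement $p_j=V^*_j/V^*$, the paper first spends a paragraph arguing by symmetry (no canonical ordering of sub-regions, empty padding up to $K^*=\max_i K^i$, averaging over $i$ and $j$) that all expected volumes coincide, $V^*_j=V^*/K^*$; only with that extra step do the two expressions agree. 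Your approach skips this symmetry detour entirely: by choosing the expected log-likelihood $\sum_j p^*_j\log\hat p_j$ as the objective, Gibbs' inequality gives $\hat p=p^*$ directly for \emph{any} $p^*$, not just the uniform one, which is both cleaner and more general. The paper's version has the advantage of not needing the common-total-scale assumption you flag (it never swaps $\mathbb{E}[V^i_j/V^i]$ with $V^*_j/V^*$), but at the cost of the symmetry argument and a narrower conclusion. Your careful remarks about why cross-entropy is the ``right'' functional, and about the LLN passage from $q$ to $\hat p$, are points the paper's proof does not address at all.
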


\begin{proof}

First, we discuss the properties of the potential input field $X^*=E(X)$. For the set of all possible input domains $\mathbf{X}=\{\mathcal{X}_0, \mathcal{X}_1, \cdots \}$, since the consistent modeling among all its possible elements (as shown in Lemma 1), so for the defined potential input domain $X^*$, the number of its sub-regions $K^*$ is $K^*=\max(K^0, K^1, \cdots)$.

For any $i$-th input domain $X_i$, the $K^*>K^i, {R_{K^i+1}\cdots R_{K^*}}$ can be regarded as empty sub-regions, and there is no specific ordering among all its sub-regions ${R_{0}, R_{1}\cdots R_{K^*}}$. For the potential input domain $X^*$, the mathematical expectation of the scale of any sub-region $R^*_j$ is $V^*_j=\mathbb{E}_i\mathbb{E}_j(V^i_j)$, so $\forall i, j, V^*_i = V^*_j$, \textit{i.e.}, $\forall j, V^*_j = V^*/K^*$. And for $X^*=\mathbb{E}(\mathbf{X})$, since the density properties of each possible input domain are completely random, $X^*$ can be seen as uniform in density in expectation.

Therefore, consider the set $\mathbf{x}^s=\{x_0^s,\cdots , x_N^s\}$ sampled from $\mathcal{X}^*$, for any sample $x_i^s$, the probability that it belongs to the $j$-th sub-region $\mathcal{R}^*_j$ is $p (x_i^s\in \mathcal{R}^*_j)$. Since differences within sub-regions are ignored, samples in the same sub-region can be considered to have the exact same class. Therefore, maximizing the information amount of the sample set $\mathbf{x}^s$ to reflect the latent region $\mathcal{X}^*$ can be expressed as:
\begin{align}
\max\mathcal{H}(\mathbf{x}^s)=-\sum_i\sum_j p (x_i^s\in \mathcal{R}^*_j)\log p (x_i^s\in \mathcal{ R}^*_j)
\end{align}
Since the sampling of all samples is completely independent, the above formula is equivalent to
\begin{align}
\sum_i\max\mathcal{H}_i(\mathbf{x}^s)=\sum_i\left(\max-\sum_j p (x_i^s\in \mathcal{R}^*_j)\log p (x_i^s \in \mathcal{R}^*_j)\right).
\end{align}

Since the samples are non-specific, we just need to discuss $\max\mathcal{H}_i(\mathbf{x}^s)$ here.
We simplify $p (x_i^s\in \mathcal{R}^*_j)$ as $p_j$, and the optimization problem can be defined as:
\begin{align}
\max\mathcal{H}_i(\mathbf{x}^s)=\max-\sum_{j=1}^{K^*} p_j\log p_j.
\end{align}
We introduce Lagrangian multiplier $\lambda$, the constructed Lagrangian function is:
\begin{align}
L\left(p_j, \lambda\right)=-\sum_{j=1}^{K^*} p_j \log p_j+\lambda\left(\sum_{j=1}^{K^*} p_j-1\right)
\end{align}
And then solve as:
\begin{align}
\left\{\begin{array}{r}
\frac{\partial L\left(p_j, \lambda\right)}{\partial p_j}=0 \\
\sum\limits_{j=1}^{K^*} p_j=1
\end{array}\right.
\end{align}

Then it is available that:
\begin{align}
&\frac{\partial L\left(p_j, \lambda\right)}{\partial p_j}=0 \\
\Rightarrow &\frac{\partial \left[-\left(p_j\log p_j\sum_{i\neq j}p_i\log p_i\right)+\lambda\left(p_j+\sum_{i\neq j}p_i-1\right)\right]}{\partial p_i}
=0 \\
\Rightarrow &-\left(\log p_j+1\right)+\lambda=0 \\
\Rightarrow &p_j=\frac{1}{K^*}.
\end{align}
Put $p_j=2^{\lambda-1}$ into $\sum_{j=1}^{K^*} p_j=1$ to get:
\begin{align}
\sum_{i=1}^n 2^{\lambda-1}=1 \Rightarrow 2^{\lambda-1}=\frac{1}{n} \Rightarrow p_i=\frac{1}{n}.
\end{align}
Bring $p_i=\frac{1}{n}$ into $H(X)=-\sum_{i=1}^n p_i \log p_i$ to get $H(X)=\log n$, so $ H(X) \leq \log n$;
Therefore, when $p_j=\frac{1}{K^*}$, the information entropy $\mathcal{H}(\mathbf{x}^s)$ is maximized.

Since according to the above proof, $\forall j, V^*_j = \frac{V^*}{K^*}$, so when
When $p (x_i^s\in \mathcal{R}^*_j)=\frac{V^*_j}{V^*}=1/K^*$, the information entropy $\mathcal{H}(\mathbf{x}^s)$ is maximized, that is, the sample set $\mathbf{x}^s$ can maximize the amount of information that reflects the potential area $\mathcal{X}^*$ at this time. And Theorem~\ref{th:dfq} is proved.

\end{proof}

\section{Visualization}

In Fig.~\ref{fig:app-vis} we show the visualization of more synthetic samples. Compared to the synthetic samples in existing generative data-free methods like ZeroQ and GDFQ, the synthetic samples produced by our DSG have more diverse colors and textures in visualization, which verifies from another aspect that DSG improves synthetic samples through diversification.

\begin{figure}[!t]
\centering
\includegraphics[width=1.0\linewidth]{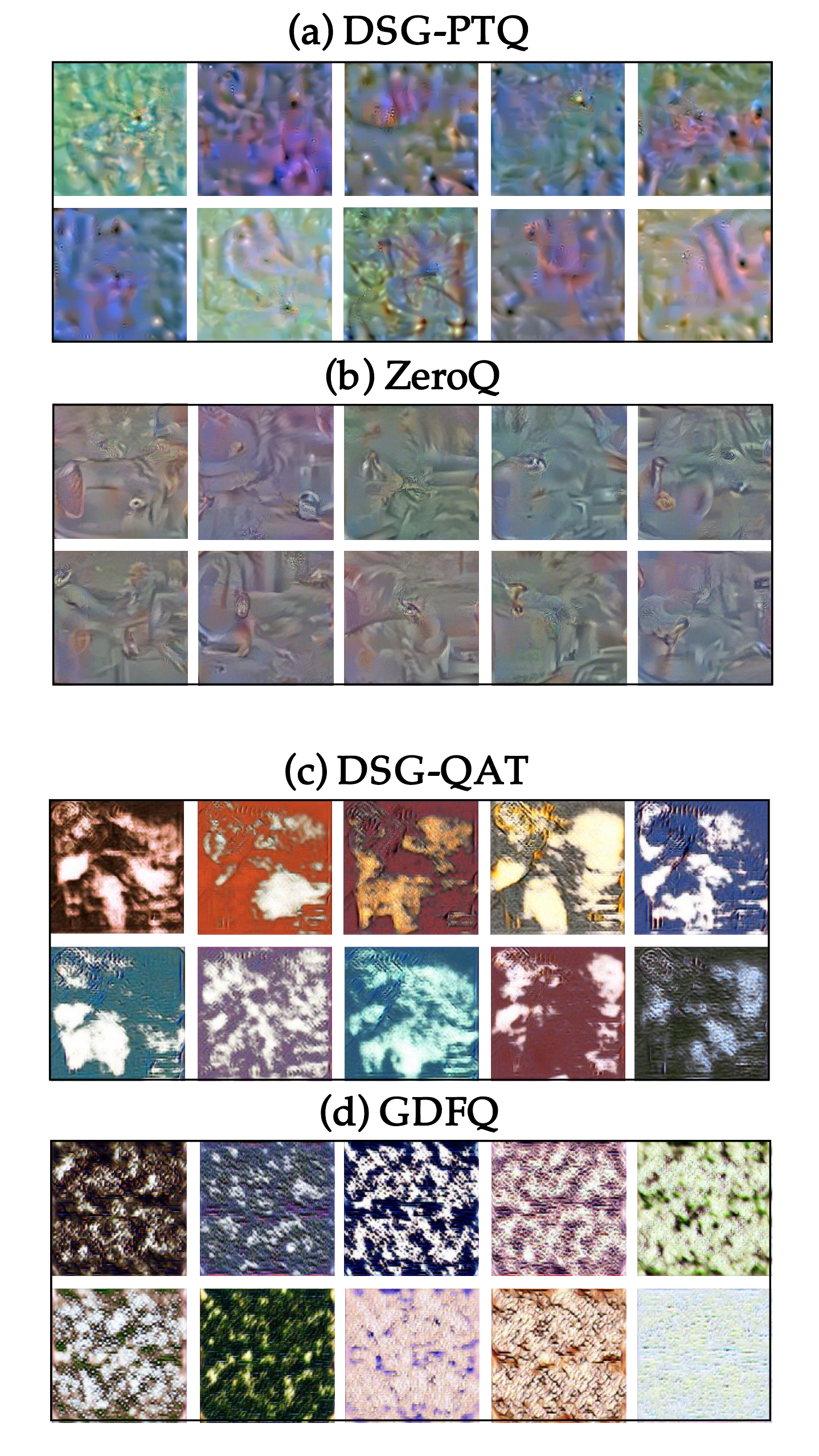}
\caption{Visualization of more synthetic data for ZeroQ, GDFQ, and our DSG method in PTQ and QAT approaches. Each showcases 10 examples.}
\label{fig:app-vis}
\end{figure}


%

\end{document}